\documentclass{article} %
\usepackage{iclr2024_conference,times}

\usepackage{amsmath,amsfonts,bm}

\def\eqref#1{equation~\ref{#1}}

\def\1{\bm{1}}

\DeclareMathAlphabet{\mathsfit}{\encodingdefault}{\sfdefault}{m}{sl}
\SetMathAlphabet{\mathsfit}{bold}{\encodingdefault}{\sfdefault}{bx}{n}

\usepackage{hyperref}
\usepackage{url}
\usepackage{enumitem}
\usepackage{mathtools}
\usepackage{multirow}
\usepackage{graphicx}
\usepackage{graphicx,xcolor}
\usepackage{wrapfig}
\usepackage{listings}
\usepackage{capt-of}%
\hypersetup{
colorlinks=true,
linkcolor=blue,
filecolor=magenta,
urlcolor=cyan,
citecolor=cyan
}

\usepackage[utf8]{inputenc} %
\usepackage[T1]{fontenc}    %
\usepackage{hyperref}       %
\usepackage{url}            %
\usepackage{booktabs}       %
\usepackage{amsfonts,amsmath,amsthm}       %
\usepackage{nicefrac}       %
\usepackage{microtype}      %
\usepackage{xcolor}         %
\usepackage{graphicx}
\usepackage{multirow}
\usepackage{algorithm,algorithmic}

\newtheorem{lemma}{Lemma}
\newtheorem{theorem}{Theorem}
\newtheorem{claim}{Claim}
\newcommand{\blfootnote}[1]{{\renewcommand{\thefootnote}{\roman{footnote}}\footnotetext[0]{#1}}}

\title{Learning from Label Proportions: Bootstrapping Supervised Learners via Belief Propagation}

\author{%
  Shreyas Havaldar$^{1*}$\quad
    Navodita Sharma$^{1*}$  \\
    \textbf{Shubhi Sareen}$^2$\quad
    \textbf{Karthikeyan Shanmugam}$^1$\quad
    \textbf{Aravindan Raghuveer}$^1$\quad \\
    $^1$Google Research India\quad$^2$Google India
}

\iclrfinalcopy %
\begin{document}

\maketitle
\blfootnote{$^*$Equal Contribution. Correspondence to \texttt{\{\href{mailto:shreyasjh@google.com}{shreyasjh},\href{mailto:navoditasharma@google.com}{navoditasharma}\}@google.com}}

\begin{abstract}
    Learning from Label Proportions (LLP) is a learning problem where only aggregate level labels are available for groups of instances, called bags, during training, and the aim is to get the best performance at the instance-level on the test data. This setting arises in domains like advertising and medicine due to privacy considerations. We propose a novel algorithmic framework for this problem that iteratively performs two main steps. For the first step (Pseudo Labeling) in every iteration, we define a Gibbs distribution over binary instance labels that incorporates a) covariate information through the constraint that instances with similar covariates should have similar labels and b) the bag level aggregated label. We then use Belief Propagation (BP) to marginalize the Gibbs distribution to obtain pseudo labels. In the second step (Embedding Refinement), we use the pseudo labels to provide supervision for a learner that yields a better embedding. Further, we iterate on the two steps again by using the second step's embeddings as new covariates for the next iteration. In the final iteration, a classifier is trained using the pseudo labels. Our algorithm displays strong gains  against several SOTA baselines (upto \textbf{15\%}) for the LLP Binary Classification problem on various dataset types - tabular and Image. We achieve these improvements with minimal computational overhead above standard supervised learning due to Belief Propagation, for large bag sizes, even for a million samples. %

\end{abstract}
\section{Introduction}

Learning from Label Proportions (henceforth LLP) has seen renewed interest in recent times due to the rising concerns of privacy and leakage of sensitive information \citep{ardehaly2017co,busa2023easy,zhang2022learning,kobayashi2022risk,yu2014learning, chen2023learning}. In the LLP binary classification setting, all the training instances are aggregated into \textit{bags} and only the aggregated label count for a bag is available, i.e. proportion of $1$'s in a bag. Features of all instances are available. This can be seen as a form of weak supervision compared to providing instance-level labels. The main goal is learn an instance wise predictor that performs very well on the test distribution. 

 There are two sources of information that can help in predicting the instance wise label on the training set. One is the bag level label proportions that are provided. The other source of information is indirect and comes from the fact that any smooth true classifier would likely assign similar labels to instances with similar covariates or feature vectors. Covariate information of instances belonging to the bags are explicitly given. Some of the current methods \citep{yu2014learning,ardehaly2017co} propose to fit the average soft scores over a bag, predicted by a deep neural network (DNN) classifier, to the given bag label proportion. %
 There is another class of approaches that seek to train a classifier on instance level loss obtained using some form of \textit{pseudo labeling} \citep{liu2021two,zhang2022learning}.

Our work builds on the idea of forming pseudo labels per instance. Our key observation is that one could utilize \textit{two} types of information: \textit{bag level constraints} and \textit{covariate similarity information} in an explicit way. To realize this, we take inspiration from coding theory for communication systems \citep{mackay2003information,kschischang2001factor}, where one of the fundamental problems is to decode an unknown message string sent by the encoder using only parity checks over groups of bits from the message. Parity checks provide \textit{redundancy} that battles against noise/corruptions in the channel. The state of the art codes \citep{mackay2003information} are decoded using Belief Propagation \citep{pearl1988probabilistic} on a factor graph where message bits are variable nodes (whose "label" is to be learnt) and parity checks form factor nodes (that constrain the sum of bits in the parity checks to be odd or even). Sum-product belief propagation is used to learn the marginal soft score on the each of the message bits. %
\begin{figure}[t!]
\centering
\vspace{-8mm}
\includegraphics[width=10cm, trim={0 3cm 0 0}]{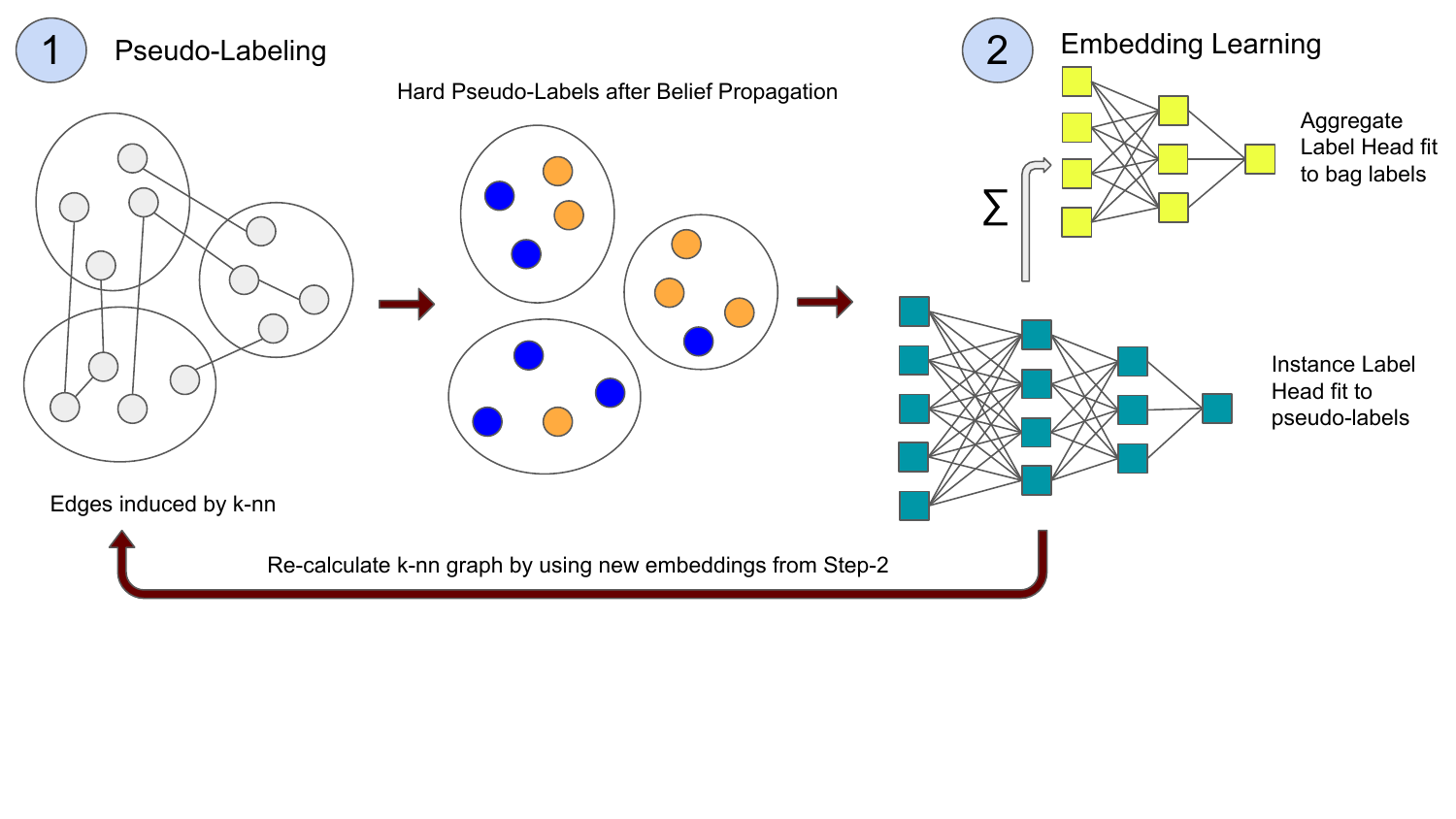}
\vspace{-4mm}
\caption{12 instances placed equally into 3 bags.  Step-1: On the k-nn graph induced by the covariate embeddings we perform belief propagation to obtain pseudo-labels that respect edge constraints and bag constraints. Then in Step-2 we fit a MLP to the instance pseudo-labels and bag aggregate label.
Embedding learned in an intermediate layer is used to further refine the k-nn graph in Step-1. (Figures best viewed in colour)} 
    \vskip -\baselineskip
\vspace{-0.1cm}
\label{fig:Full-arch}
    \vskip -\baselineskip
\end{figure}

Motivated by the above connection, we propose a novel iterative algorithm that has two stages: We use an existing embedding to learn pseudo-labels and then use the learnt pseudo labels to refine embeddings and we iterate this procedure with new embeddings. %
Our central approach is described in Figure \ref{fig:Full-arch}. We outline our contributions in detail below:

1) We draw a parallel between the LLP problem and the parity recovery problem for \textit{pseudo labeling}. The labels of instances are bits of the message and parity checks are the bag level counts (more general than just parity). We adapt this analogy naturally to form a \textit{Gibbs distribution} that enforces the bag constraints. To obtain further redundancy, we exploit covariate information, where, for every pair of instances that are in the \textit{$K$-nearest neighborhood} of each other, we force their binary \textit{labels} to be \textit{similar}, i.e. another "\textit{parity constraint}" is added to the Gibbs distribution. As can be seen in a cartoon depiction in step $1$ of Fig. \ref{fig:Full-arch}, the nearest neighbors induce similar labels (colors). Then, we do a \textit{sum product} Belief Propagation (\textit{BP}) to obtain marginal pseudo labels.  To the best of our knowledge, ours is the first work at making this connection between the information theoretic approach of recovering messages from parity and the LLP problem. 

2) Our novelty in the second stage is to utilize thresholded soft pseudo labels to provide full supervision to learn a new embedding. However, we observe that after marginalization thresholded soft labels may violate bag constraints. Therefore, we use a novel Deep Neural Network architecture that has an 1) \textit{instance head} giving rise to an instance level loss involving the thresholded soft pseudo label and 2) \textit{bag level head} formed by pooling the penultimate layers's representations of instances within a bag giving rise to a loss between bag proportions and predicted bag level proportion. %
We call the final loss as \textit{Aggregate Embedding loss}. This loss is used to train the penultimate layer embedding.

3) We iterate the above two steps by using the embedding obtained in the previous step as features. We show that our iterative two stage algorithm, finally produces instance level predictions that outperform a number of LLP baselines including DLLP \citep{ardehaly2017co} by wide margins. Improvements obtained are upto $\textbf{15\%}$ on standard \textbf{UCI} classification datasets. We outperform the baselines by upto $\textbf{0.8}\%$ on a large, challenging \textbf{Criteo} dataset and upto $\textbf{7\%}$ on \textbf{image} datasets.  Our methods mostly outperform most baselines in the large bag regime (upto \textbf{15\%}gains for bag size $\geq512$ ) where supervision is very weak. It is worth noting that on bag size of 2048 there are only a maximum of \textasciitilde $20$ bags on the datasets and yet our method does not display significant degradation in performance. Our ablations show that \textit{remarkably} $1$-NN achieves most of the gains relative to using $k$-NNs in the pseudo labeling step (Section \ref{sec:1nn}). We find that reducing percentage of KNN constraints (to $50\%$) registers a significant drop in test metrics underscoring the importance of 'parity checks' from covariate nearness (Section \ref{sec:knn_constraints}) We provide many other ablations that validate different components of our approach (Section \ref{sec:ablations} and Section \ref{sec:add_ablations}).

\vspace{-2mm}
\section{Related Work}

\textbf{Learning from Label Proportion (LLP):} \citet{quadrianto2008estimating} use kernel methods under the assumption of class conditioned independence of bags. \citet{yu2013proptosvm} were one of the first to tackle the problem and present theoretically backed $\propto$SVM, a non-convex integer programming solution. This is computationally infeasible on large sized datasets we use.  \citet{patrini2014almost} introduced a fast learning algorithm that estimates the mean operator using a manifold regularizer while providing guarantees on the approximation bounds. %
\citet{scott2020learning} provide an approach that use Mutual Contamination Models which provides some form of weak supervision. The method does not scale to large number of training instances. Several recent methods have been introduced to learn from bagged data. \citet{poyiadzi2018label} propose an algorithm that uses label propagation, i.e. iterative damped averaging of neighbors labels where neighbors are decided based on some similarity measure. Every node is set to the proportion of ones from the bags they participate in at the beginning. This is closest in spirit to ours. However, we always impose bag level constraints and covariate information through a joint Gibbs distribution and use BP to marginalize it followed by an embedding refining step. \citet{poyiadzi2018label} provide results on Size-3 Bags and Size-100 Datasets and their algorithm requires extensive compute for inversion of a kernel matrix. \citet{ardehaly2017co} use deep neural networks to tackle the LLP problem show very good empirical performance. When bag size is large their method degrades significantly. \citet{tsai2020learning} use covariate information in the form of a consistency regularizer that modifies the decision boundary. The work does not perform well at larger bag sizes and is computationally expensive for large datasets.  \citet{zhang2022learning} use forward correction loss to draw parallels to learning from label noise. The method does not converge well for smaller bags. \citet{busa2023easy} is a very recent work approaching the problem by providing a surrogate loss. We further elaborate on our baselines in section \ref{Baselines}.

Belief propagation in decoding error correcting codes has a long history. There is a related problem of learning from pooled data which has a group testing flavor. We review these in the supplement.

\section{Problem Setting and Overview of Our Solution}

Consider a supervised learning dataset ${\cal D} = \{ (\mathbf{x}_i,y_i)_{i=1}^m \}$ where $x_i \in \mathbb{R}^d,~y_i \in \{0,1\}$. The instance wise labels $y_i$'s are not explicitly revealed to the learning agent. There are a set of "bags" ${\cal B}= \{S_1 \ldots S_n\}$ that contains subsets $S_i \subseteq {[m]}$ (thus denoting the indices that correspond to the instances present in that "bag") and for each $S_i$, bag level counts $y(S_i) = \sum_{j \in S_i} {y^{true}_j}$. In this work, we will consider the case of disjoint bags, i.e. $S_i \cap S_j = \emptyset$. Let the vector of bag level counts be $\left[y(S_1) \ldots y(S_n) \right] = y({\cal B})$. In this work, we consider the following problem:

\textit{Given the covariates ($\{x_i\}_{i=1}^m$), information about bag compositions (${\cal B}$) and bag level counts of labels ($y({\cal B})$), our aim is to learn a classifier $f:\mathbb{R}^d \rightarrow \{0,1\},~f \in {\cal F}$ such that the loss $\ell(\cdot)$ on the test distribution $\mathbb{E}_{(x,y) \sim \mathbb{P}}[\ell(f(x),y)]$ is minimized}. Here, ${\cal F}$ is the set of classifiers we would like to optimize over. We term this as \textit{learning from label proportions} problem.

Our main contribution to this above problem is an iterative procedure that repeats two steps: a) Pseudo Labeling obtained through Sum-Product Belief Propagation that uses a Gibbs distribution capturing covariate information and bag level constraints and b) Learning a better embedding that uses training signals from an instance level predictor on top of the embedding that fits pseudo labels while using the same embedding over multiple instances to simultaneously predict bag level proportions. Then, in the next iteration we use the embedding learnt in the second step instead of original covariates and perform the two steps again. In the last iteration, we simply obtain an instance level predictor to score on the test dataset. Our complete approach is given in Algorithm \ref{our_algorithm}.
\begin{algorithm}[t!]
    \caption{Iterative Embedding Refinement with BP and Aggregate Embedding}
    \label{our_algorithm}

    \begin{algorithmic}
   \STATE {\bfseries Input:} Covariates ${\cal D} = \{x_i\}$, Bag Information ${\cal B}=\{S_1, S_2 \ldots S_n\}$, Bag Label Counts $\{Y(S)\}_{S \in {\cal B}}$. Parameters $k,\lambda_s,\lambda_b,T,L,L',\tau,\delta_d,k(\cdot,\cdot),d(\cdot,\cdot)$
    \STATE {\bfseries Output:} Soft Classifier $f_L(\cdot)$
    \STATE Set Covariates $\{z_i\} \leftarrow \{x_i\}$
    \FOR {$r \leftarrow 0$ {\bfseries to} $R$}
    \STATE (\texttt{Pseudo Labeling Step})
    \STATE Initialize node and Pairwise potentials $h_i,J_{i,j}$ for Belief Propagation as in \eqref{eq:potentials} using $\{z_i\}$. 
    \STATE $\{\mathbb{P}^{r}(y_i)\} \leftarrow $\texttt{SUM-PRODUCT-BP}$(\{h_i\},\{J_{ij}\})$.  
    \STATE Obtain Hard Labels: $y^{r}_i \leftarrow \mathbf{1}_{\mathbb{P}^{r}(y_i) > \tau}$ . 
    \STATE 
    \STATE (\texttt{Train embedding with instance and bag losses})
    \STATE Train the DNNs $f_{L}(\cdot),g_L(\cdot)$(instance and bag loss heads) using hard labels $\{y^{r}_i\}$ and bag level labels using ${\cal L}_{\mathrm{Agg-Emb}}$ loss as in \eqref{eq:agg-emb}. 
    \STATE
    \STATE Set Covariates $\{z_i\} \leftarrow \{f_{L'}(x_i)\}$ (\texttt{Update Embedding})
    \ENDFOR
    \STATE {\bfseries Output:} Return the function $f_L(\cdot)$. (\texttt{Obtain instance label predictor})
    \end{algorithmic}
\end{algorithm}
\section{Details of our Algorithm}
\label{sec:algo}
\subsection{Step 1: Obtaining Pseudo-Labels through Belief Propagation (BP)}
\label{subsec:step-1}

Now, we describe the first step that involves obtaining soft labels from the bag constraints and covariate information. To this end, we form a Gibbs distribution whose energy function captures two-fold information:
a) bag level constraints and b) label similarity when two points that are nearby. 

\textbf{Gibbs Distribution from Bag level constraints and covariate information:}
The bag level constraints are penalized using a least squares loss between sum of all labels in the bag and the count given by $(\sum_{j \in S_i} y_j -y (S_i) )^2$. When two points are close in some distance measure, we would like to make sure their labels are close. In order to capture this, for every point $x_i$ we form $k$-nearest neighbor set $N_k(x_i)$ with respect to a given distance function $d(x,x')$, with $x_j$ added to $N_k(x_i)$ if $d(x_j,x_i)\leq \delta_d$. If $x_j \in N_k(x_i)$, we use the least squares penalization $(y_i-y_j)^2$. We also use a kernel function $k(x,x')$ that scales the least square penalization due to nearness. In most of experiments, we fix $d(x,x')$ to be the cosine distance or euclidean distance. Choices of $k(\cdot)$ are Matern Kernel and RBF kernels.

We define the Gibbs distribution below for the entire dataset ${\cal D}$ given by:
\begin{align}\label{Gibbs_1}
    \mathbb{P}_{\lambda_b, \lambda_s}(y_1..y_m) \propto \exp \left( - \lambda_b \sum \limits_{S_i \in {\cal B}}(\sum_{j \in S_i} y_j -y (S_i))^2   - \lambda_s \sum_{x_i,x_j \in {\cal D},x_j \in  N_k(x_i)}  k(x_i,x_j) (y_i -y_j)^2 \right)
\end{align}
This is an Ising model with pairwise potentials and node potentials (external field). We note that both the terms are invariant upto a shift in $y_i$ by a constant. Therefore, transformation to $\{+1,-1\}$ through $y_i' = 2 y_i -1$ would only scale $\lambda_b, \lambda_s$ by a factor of $4$. Therefore, the energy function in terms of $\{+1,-1\}$ upto a universal scaling is identical to that of $\{0,1\}$. Hence, we will remain in $\{0,1\}$ and state the pairwise and node potentials.

Observing that $y_i^2 = y_i$ and the fact that constant terms in the energy function would not affect the distribution (due to normalization), we can rewrite \eqref{Gibbs_1} as:
\begin{align}\label{Gibbs_2}
     \mathbb{P}_{\lambda_b, \lambda_s}(y_1 \ldots y_m)  \propto \exp \left(  \sum \limits_{i \in [m]}  y_i  \left [ \sum \limits_{S \in {\cal B}: i \in S} \lambda_b  (2 y(S)-1)  - \lambda_s 
     \sum_{x \in N_k(x_i)} k(x_i,x) \right] +  \right. \nonumber \\
     \left. \sum \limits_{i \neq j} y_i y_j \left[ 2 \lambda_s k(x_i,x_j) (\mathbf{1}_{x_j \in N_k(x_i)} + \mathbf{1}_{x_i \in N_k(x_j)}) -  2 \lambda_b  \vert S \in {\cal B}: (i,j) \in S \rvert  \right]   \right)
\end{align}
\textbf{Remark:} $\lvert \cdot \rvert$ denotes size of the set satsifying the condition. We note that not all pairwise terms are present. If two points are not in K-NN neighborhood of each other and if they don't belong together in any bag, then there would be no pairwise term corresponding to it. In our experiments, we consider the case where Bags are disjoint and non overlapping. Therefore, every instance $i$ participates in only one bag. Use of K-NN and small disjoint bags creates only linear number of terms in the Gibbs distribution.

\textbf{Pairwise and Node potentials:}
This is an Ising model $\mathbb{P}(\mathbf{y}) \propto \exp( \sum y_i h_i + \sum_{i \neq j} y_i y_j J_{i,j} )$ with node potentials and pairwise potentials given by:
\begin{align}\label{eq:potentials}
h_i  &= \sum \limits_{S \in {\cal B}: i \in S} \lambda_b  (2 y(S)-1)  - \lambda_s \sum_{x \in N_k(x_i)} k(x_i,x) \\ 
J_{i,j}  &=  2 \lambda_s k(x_i,x_j) (\mathbf{1}_{x_j \in N_k(x_i)} + \mathbf{1}_{x_i \in N_k(x_j)}) -  2 \lambda_b  \vert S \in {\cal B}: (i,j) \in S \rvert 
\end{align}     
\textbf{Obtaining Pseudo Labels using sum-product Belief Propagation (BP):} We use the classical sum-product Belief Propagation \citep{mackay2003information} on \eqref{Gibbs_2} to approximate the marginal distribution $\mathbb{P}_{\lambda_s,\lambda_b}(y_i)$.  We briefly describe the algorithm,
At every round $t$, node $i$ passes the following message $m^t_{j \rightarrow i}(y_i),~ y_i \in \{0,1\}$ to every node $j: J_{i,j} \neq 0$ given by:
 \begin{align}\label{eq:message_passing}
     m^t_{j \rightarrow i}(y_i) = \sum_{y_j \in \{0,1\}} \exp( y_i h_i) \exp( J_{i,j} y_i y_j) \prod \limits_{k \neq i: J_{k,j} \neq 0} m^{t-1}_{k \rightarrow j}(y_j)  
 \end{align}
Here, $m^{t-1}(\cdot)$ represents the message passed in the previous iteration.
After $T$ rounds of message passing, we marginalize by using the following (and further normalizing it):
\begin{align} \label{eq:marginalization}
\mathbb{P}(y_i) \propto \exp(y_i h_i) \prod \limits_{j:J_{i,j} \neq 0} m_{j \rightarrow i}(y_i)
\end{align}
\textbf{Implementation:} We denote \texttt{SUM-PRODUCT-BP}$(\{J_{i,j}\}, \{h_i\})$ to be the sum product belief propagation function that implements $T$ rounds of \eqref{eq:message_passing} and \eqref{eq:marginalization}. We use \texttt{PGMax} package \citep{zhou2022pgmax} implemented in \texttt{JAX} \citep{jax2018github} where we just need to specify the potentials $J_{i,j}$ and $h_i$.

\subsection{Step 2: Embedding Refinement Leveraging Pseudo Labels}
\label{subsec:step-2}

We observe that \eqref{Gibbs_1} uses covariate information by exploiting nearness using nearest neighbors induced by a distance function $d(x,x')$ and using a kernel $k(x,x')$. We now provide an iterative method to refine representation $x_i$ such that points with true labels are brought together progressively although only bag level labels are available. We start with the original features $\{x_i\}$ given to the algorithm (this could already be an embedding obtained from another self-supervised module or any other unsupervised training method like auto encoding).

\textbf{Learn marginal Pseudo Labels:} We first identify pseudo labels $\mathbb{P}_{\lambda_s,\lambda_b}(y_i)$ by applying \texttt{SUM-PRODUCT-BP} on $h_i, J_{ij}$ obtained from $\{x_i\}$'s. We expect the pseudo labels not to be perfect since it operates on only bag level information and covariate similarity information.

\textbf{Learning Embedding using Pseudo Labels:}
Let us consider a deep neural net (DNN) classifier (with $L$ layers) of the form given below:
\begin{align}
f_L(x) = \mathrm{softmax}(W_L^T \sigma_{L-1}( W_{L-1} \sigma_{L-2} (\cdots \sigma_1(W_1^Tx + b_1) ) + b_{L-1} ) + b_L )
\end{align}
where $\sigma_\ell$ represents a coordinate wise non-linearity (like \texttt{Relu} function), $W_\ell \in \mathbb{R}^{d_\ell \times d _{\ell-1}}, ~\ell \in [1:L]$ represents a weight matrix at the $\ell$-th layer that multiplies the representation from the $\ell-1$ layer of dimension $d_{\ell-1}$ and $b_{\ell}$ represents the biases added coordinate wise to the output which is $d_{\ell}$ dimensional. In our work, we focus on binary classification where $d_{L}=1$ and $d_0 = d$ (input feature dimension). We call $f_L(\cdot)$ the \textit{instance loss} head.

One option is to just fit $f_L(x_i)$ to the information from Pseudo labels $\mathbb{P}_{\lambda_s,\lambda_b}(y_i)$ at the instance level. However, it may not be consistent with the bag level labels in the expected sense. So we impose bag level constraints by first average pooling third to last layer output $f_{L-2}(x)$ across instances $x \in S$ where $S$ is a bag of instances. Then, we have one more hidden layer on top of this pooled representation to finally produce a soft score for the bag proportion. We call this the \textit{bag loss} head and it produces the following soft score for a bag of instances $S \subseteq {\cal B}$:
\begin{align}\label{agg:op}
     g_L(S) = \sigma_L \left(V^T_L \left( \sigma_{L-1}\left(V^T_{L-1} \mathrm{Avg~Pooling}[\{  f_{L-2}(x_i)\}_{i \in {\cal S}}] + \tilde{b}_{L-1}\right) + \tilde{b}_L \right) \right)
\end{align}

We define two loss functions that learns $f_{L-2}(z)$ representation to simultaneously be consistent with 1) the bag label proportion through the average pooling operation in \eqref{agg:op} and 2) the other one that makes it consistent with hard labels obtained by \textit{thresholding} soft pseudo labels given by $y^{0}_i = \mathbf{1}_{\mathbb{P}_{\lambda_s,\lambda_b}(y_i) > \tau}$,
where $\mathbf{1}_{E}$ is the indicator function when event $E$ holds. 

The composite loss function is given by:
\begin{align}\label{eq:agg-emb}
    {\cal L}_{\mathrm{Agg-Emb}}(S) =\sum_{i \in S}
     \mathrm{CE} \left(f_L(x_i), \{y^{0}_i\} \right) + \lambda_{a}  \mathrm{CE} \left(g_L(S),\frac{y(S)}{\lvert S \rvert}\right)  
\end{align}
Here, $\mathrm{CE}$ is the cross entropy loss. We call this composite loss function ${\cal L}_{\mathrm{Agg-Emb}}$ the \textit{aggregate embedding loss} function. 

\subsection{Iterative Refinement}
We take the representation computed at some layer $L' < L$ (denoted by $f_{L'}(x)$) and apply the belief propagation (\texttt{SUM-PRODUCT-BP}) again where covariates are given by $\{z_i = f_{L'}(x_i)\}$. We typically use $L'=L-2$. Then, let the new pseudo labels obtained be $\mathbb{P}^{1}_{\lambda_s,\lambda_b}(y_i)$ using $\{z_i\}$ as covariates. We obtain hard labels from thresholding pseudo labels by $\tau$ to obtain $y^{1}_i =  \mathbf{1}_{\mathbb{P}^{1}_{\lambda_s,\lambda_b}(y_i) >\tau}$ We again fit similar DNNs ($f_L$, $g_L$) by using ${\cal L}_{\mathrm{Agg-Emb}}$ to the new hard labels $\{y^{1}_0\}$ and the bag level labels $\{y(S)\}$. In principle, we could iterate it several times to refine embeddings progressively but we stop when the new iteration does not clearly improve performance on validation set. Refer to the section \ref{sec:2_step_convergence} for analysis on convergence of our method in 1-2 iterations. When we test on instances, we always remove the bag level head $g_L(\cdot)$ and test it with just the soft score $f_L(x)$. We describe the iterative procedure in Algorithm \ref{our_algorithm}.

\section{Experiments}
We perform extensive experimentation on four  datasets. We follow the standard procedure of creating disjoint \textit{random} bags where we sample instances without replacement from the training set, and keep repeating this for each bag, \textit{bag-size: k} number of times.   

\noindent
{\bf 1. Adult Income} \citep{Dua:2019} \citep{kohavi1996scaling}:  Classification task is to predict whether a person makes over \$50K a year based on the provided census data of 14 features. The dataset is split 90-10 as train-test and 10\% of train is used as a hold out validation following \citet{yoon2020vime} 

{\bf 2. Bank Marketing} \citep{Dua:2019} \citep{moro2011using}: The task here it to predict if the client will subscribe a term deposit from 16 features. Data is split as $\frac{2}{3}$-$\frac{1}{3}$ train-test split.
We further use  $\frac{1}{3}$ of the training set as a hold-out validation set.

{\bf 3. Criteo} \citep{Krizhevsky09learningmultiple}: 1 week of ad click data to predict CTR with 39 features. We sample non-overlapping sets of 1 million, 200k and 250k instances to form  train, validation and test datasets. 
Note that Criteo is a very challenging benchmark with only +2\% AUC improvement  shown in the last 7 years~\citep{criteoLeaderBoard}

{\bf 4. CIFAR-10} \citep{criteo-display-ad-challenge} 60K images with 10 classes. 
        \textbf{CIFAR-B}: We assign label \textit{1} to all \textit{Machine} classes (0,1,8,9) and label \textit{0} to all \textit{Animal} classes (2,3,4,5,6,7). In this dataset, 40\% of all instances belong to the positive class. 
         \textbf{CIFAR-S}: All instances belonging to the class \textit{Ship} are assigned label \textit{1} and all other instances are assigned label \textit{0}. This dataset has 10\% positive instances. \\
         We use the standard Train-Test splits for these 2 datasets. We use 10\% of the data from the Train Set as Validation Set to tune our hyperparameters.  

We compare our method against several top LLP Baselines; namely DLLP \citep{ardehaly2017co}, EasyLLP \citep{busa2023easy}, GenBags \citep{saket2022combining}, LLP-FC \citep{zhang2022learning} and LLP-VAT \citep{tsai2020learning} described in detail in appendix section \ref{Baselines}

\subsection{Experimental Setup} 
\label{subsec:exp_set}
We optimize our algorithm, using hyperparameters $\lambda_s, \lambda_b \in [10^{-4}, 200] $,  $k \in [1, 30]$, $T = [50, 100, 200]$, $\tau \in (0,1), MLP_{LR} \in [10^{-6}, 1], MLP_{WD} \in [10^{-12}, 10^{-1}], \lambda_{a} \in [0, 10], \delta_d \in [10^{-4}, 1], BatchSize_{train} = [2, 4, 8, \ldots 4096, 8192]$, tuned using Vizier \citep{oss_vizier} to achieve the best Validation AUC score. Illustrative values of best hyperparameters for various experiments are given in appendix section \ref{sec:hyperparam}. We then report the corresponding Test AUROC \% averaged over 3 trials and report the sample standard deviation in parenthesis throughout our tables. We perform the same setup for all our baselines. Best number is reported in \textbf{bold} and 2nd best is reported in \underline{underline}.

We also run the same MLP on true instance labels. This provides an upper bound to the performance that we can reach using aggregated labels.  We report this number for each dataset at the table heading.
We use an MLP with 5 Hidden Layers with relu activation and the following number of hidden units: $[5040, 1280, 320, 128, 64]$ for our 2nd Step. The final layer has sigmoid activation. We use Adam optimizer and Binary Cross Entropy Loss for all our datasets. We use the same MLP for all relevant baselines as well. We report main results using $k(\cdot,\cdot)$ = Matern and $d(\cdot,\cdot)$ = Cosine. 

We perform experimentation on 6 bag sizes: 8, 32, 128, 512, 1024, 2048. 
All experiments were performed on a single NVIDIA V100 GPU. We provide further implementation details and the experimental details for the baselines in the supplementary section \ref{subsec:implementation}

\subsection{Performance Analysis}

We compare the performance of our method with several baselines on all the datasets. \textbf{Ours-Itr-\textit{n}} refers to our method run for \textit{n} iterations.

We use the original features as is for the 2 UCI Datsets with 14 features for Adult Dataset, and 16 features for Marketing Dataset respectively.
\begin{table}[t!]
\caption{Performance (Test AUROC) on UCI Tabular Datasets on Bag Sizes 8, 32, 128, 512, 1024, 2048 against major baselines. Instance-MLP performance on Adult is \textbf{90.30} {\tiny(0.08)} and on Marketing is \textbf{86.62} {\tiny(0.06)} }
\label{tab:tabular-data-uci}
\centering
\begin{minipage}{0.8\textwidth}
\centering
\resizebox{0.99\columnwidth}{!}{%
\begin{tabular}{l|cccccc}

Bag Size:  & 8                                & 32                            & 128                           & 512                                   & 1024                          & 2048                          \\ \hline
Dataset:   & \multicolumn{6}{c}{Adult}                                                                                                                                                                               \\ \hline
DLLP       & {89.19 \tiny{(0.32)}}            & {87.52 \tiny(0.46)}           & {85.87 \tiny(0.91)}           & \multicolumn{1}{r}{82.95 \tiny(1.37)} & 63.48 \tiny(2.41)             & 62.58 \tiny(2.18)             \\
EasyLLP    & {88.68 \tiny(0.48)}              & {87.51 \tiny(0.76)}           & {75.59 \tiny(0.84)}           & {66.02 \tiny(1.72)}                   & 61.65 \tiny(1.01)             & 63.21 \tiny(3.53)             \\
GenBags    & 89.22 \tiny(0.32)                & {87.72 \tiny(0.34)}           & 86.43 \tiny(0.28)             & 84.00 \tiny(0.26)                     & 83.52 \tiny(0.91)             & 80.07 \tiny(0.90)             \\
Ours-Itr-1 & \underline{89.32 {\tiny (0.26)}} & \underline{87.75 \tiny(0.32)} & \underline{86.70 \tiny(0.31)} & \textbf{84.97 \tiny(0.43)}            & \underline{83.61 \tiny(0.49)} & \underline{84.69 \tiny(0.76)} \\
Ours-Itr-2 & \textbf{89.47 \tiny(0.29)}       & \textbf{87.82  \tiny(0.33)}   & \textbf{86.87 \tiny(0.39)}    & \underline{84.01 \tiny(0.34)}         & \textbf{83.88 \tiny(0.55)}    & \textbf{84.95 \tiny(0.69)}   
\end{tabular}%
}
\end{minipage}
\begin{minipage}{.8\textwidth}
\centering
\resizebox{0.99\columnwidth}{!}{%
\begin{tabular}{l|cccccc}
Dataset:   & \multicolumn{6}{c}{Marketing}                                                                                                                                                                  \\ \hline
DLLP       & 84.49 \tiny(0.70)             & 82.65 \tiny(0.94)              & 79.69 \tiny(2.03)             & 70.36 \tiny(0.64)             & 66.39 \tiny(2.43)             & 65.60 \tiny(3.21)             \\
EasyLLP    & 83.63 \tiny(0.34)             & 82.87 \tiny(0.72)              & 75.05 \tiny(3.29)             & 68.97 \tiny(2.76)             & 50.23 \tiny(1.21)             & 50.12 \tiny(0.55)             \\
GenBags    & 85.26 \tiny(0.42)             & 83.15 \tiny(0.34)              & 79.74 \tiny (0.50)            & 69.29 \tiny(0.92)             & 64.82 \tiny(3.10)             & 58.43 \tiny(4.31)             \\
Ours-Itr-1 & \underline{85.76 \tiny(0.26)} & \underline{84.18  \tiny(0.33)} & \textbf{82.71 \tiny(0.44)}    & \underline{77.71 \tiny(0.46)} & \underline{80.56 \tiny(0.55)} & \underline{78.63 \tiny(0.83)} \\
Ours-Itr-2 & \textbf{86.26 \tiny(0.31)}    & \textbf{84.23 \tiny(0.45)}     & \underline{82.46 \tiny(0.35)} & \textbf{81.68 \tiny(0.77)}    & \textbf{81.66 \tiny (0.61)}   & \textbf{81.01 \tiny(0.92)}   
\end{tabular}%
}
    \vskip -\baselineskip

\end{minipage}
    \vskip -\baselineskip

\end{table}

Table \ref{tab:tabular-data-uci} shows the performance of our method on the two UCI datasets across 6 bag sizes. We make four observations. First, for lower bag size (8,32) our method almost bridges the gap with the instance level performance.    Second, the second iteration of our method almost always improves performance over the first iteration across both datasets. Third, we are able to consistently outperform all the baselines across all bag sizes in  both datasets. Finally, in large bag regime, our methods perform even better. For instance, with bag sizes 1024 and 2048, we are close to \textbf{15\%} better compared to the nearest  baseline DLLP.

In Table~\ref{tab:tabular-data-criteo} we compare our method with other baselines on the Criteo dataset.\footnote{We were not able to run BP on Criteo for large bag sizes since we ran into integer-overflow issues. It will take some time and perhaps even involved changes to the underlying PGMax library code to resolve them to accommodate large number of factors}
We use the self-supervised method MET \citep{majmundar2022met} to generate embeddings for Criteo dataset to obtain better initial embeddings.This is because most of the features in Criteo are categorical and some of them have large number of categories rendering naive one-hot encoding very intractable. For fairness and consistency we use MET embeddings as input for all the baselines we compare against as well.
\begin{table}[t!]
\begin{minipage}[b]{0.42\columnwidth}
\caption{Performance (Test AUROC scores) on Criteo-1M on Bag Sizes 8, 32, 128 against major baselines. Instance-MLP performance on Criteo is \textbf{75.86 \tiny(0.04)}}
\label{tab:tabular-data-criteo}
\centering
\resizebox{0.99\columnwidth}{!}{%
\begin{tabular}{l|ccc}

Bag Size:  & 8                                 & 32                                & 128                               \\ \hline
Dataset:  & \multicolumn{3}{c}{Criteo}                                                                               \\ \hline
DLLP       & 74.11 \tiny(0.09) & 72.86 \tiny(0.01) & \textbf{70.99 \tiny(0.01)}  \\ %
EasyLLP    & 70.77 \tiny(0.92) & 68.42 \tiny(0.62) & 62.87 \tiny(1.50) \\ %
GenBags    & 73.34 \tiny(0.01) & 71.32 \tiny(0.77)             & 70.39 \tiny(0.46)            \\ %
Ours-Itr-1 & \underline{74.96 \tiny(0.23)}   & \underline{73.36 \tiny(0.33)}   & 70.45 \tiny(0.51)                        \\ %
Ours-Itr-2 & \textbf{74.97 \tiny(0.24) }  &  \textbf{73.43 \tiny(0.61)}    &  \underline{70.81 \tiny(0.44) }     \\ %
\end{tabular}%
}
\vskip -\baselineskip
\end{minipage}
\begin{minipage}[b]{0.56\columnwidth}
     \centering
    \includegraphics[width=0.99\textwidth, trim={0.7cm 0 0.7cm 0},clip]{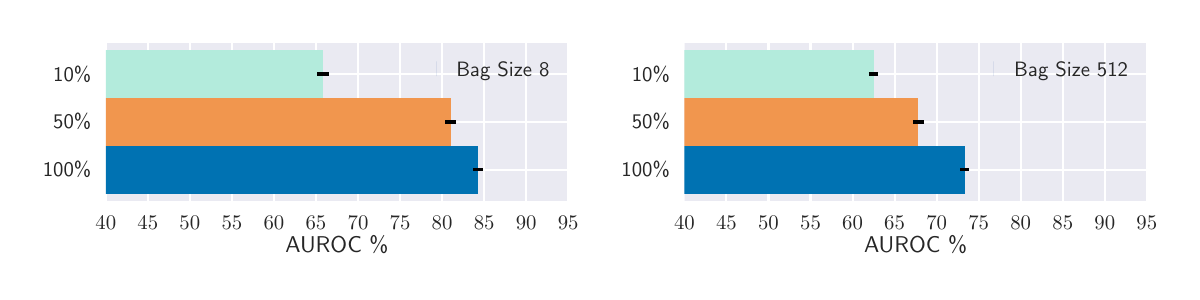}
    \vskip -\baselineskip
    \vspace{-1.5mm}
    \captionof{figure}{Comparison of the performance on adding different percentage of neighbours on Marketing on bag-size 8 and 512}
    \vskip -\baselineskip
    \label{fig:knn-percent}
\end{minipage}
    \vskip -\baselineskip

\end{table}

Our method scales really well for $1$ million samples with negligible computational cost for the sum-product BP step.  Criteo is an inherently harder dataset to work with due to high feature dimensionality  and most of them being categorical. Over the last 7 years, the dataset has seen a 2\% improvement in AUC while  our  method is able to produce a \textbf{0.8\%} improvement over DLLP. Similar to the previous tabular datasets, we also observe here that the iteration seem{s} to help improve performance.

In Table~\ref{tab:image-all} we report performance on the CIFAR image dataset. 
We use the SimCLR \citep{chen2020simple} contrastive learning method to obtain unsupervised embeddings for our experiments on the Image Datasets. We use SimCLR embeddings as input for the relevant baselines we compare. 
\begin{table}[t!]
\caption{Performance (Test AUROC scores) on Image Datasets on Bag Sizes 8, 32, 128, 512, 1024, 2048 against major baselines. * denotes \textit{SVD did not converge error} (while obtaining noisy labels), ! denotes \textit{Out of Memory Error}. \&: Validation performance of first iteration of our algorithm was clearly superior. Instance-MLP AUROC on CIFAR-B is \textbf{96.58 \tiny(0.04)} and on CIFAR-S is \textbf{95.06 \tiny(0.07)}} 
\label{tab:image-all}
\centering
\begin{minipage}{0.8\textwidth}
\centering
\resizebox{0.99\columnwidth}{!}{%
\begin{tabular}{l|cccccc}
Bag Size:  & 8                 & 32                 & 128               & 512                & 1024              & 2048              \\ \hline
Dataset:   & \multicolumn{6}{c}{CIFAR-B}                                                                                             \\ \hline
DLLP       & \underline{95.49 \tiny(0.18)} & \textbf{94.05 \tiny(0.09)}  & \textbf{90.90 \tiny(0.06)} & \textbf{86.18 \tiny(0.67)}  & 76.65 \tiny(2.62) & 68.19 \tiny(1.75) \\
GenBags    & 95.32 \tiny(0.05) & 93.40 \tiny(0.17) & 88.97 \tiny(0.33) & 84.30 \tiny(0.60)  & \textbf{82.73 \tiny(1.16)} & \textbf{75.22 \tiny(0.75)} \\
EasyLLP    & 91.33 \tiny(0.43) & 84.67 \tiny(3.35)  & 81.37 \tiny(2.28) & 58.07 \tiny(13.91) & 65.34 \tiny(5.51) & 55.37 \tiny(9.62) \\
LLP-FC     & 90.19 \tiny(0.32) & 88.53 \tiny(0.31)  & 82.46 \tiny(0.49) & 80.12 \tiny(1.34)  & 78.89 \tiny(0.51) & \underline{73.25 \tiny(2.26)} \\
LLP-VAT    & 93.79 \tiny(0.29) & 91.38 \tiny(0.15)  & 88.22 \tiny(0.12) & !                  & !                 & !                 \\
Ours-Itr-1 & 95.39 \tiny(0.21) & 93.89 \tiny(0.18)  & 89.28 \tiny(0.35) & 85.55 \tiny(0.85)  & \underline{80.43 \tiny(1.51)} & 70.06 \tiny(1.03) \\
Ours-Itr-2 & \textbf{95.54 \tiny(0.22)} & \underline{93.97 \tiny(0.21)}  & \underline{90.37 \tiny(0.27)} & \underline{85.92 \tiny(0.45)}  & \&                & \&               
\end{tabular}%
}
\end{minipage}
\begin{minipage}{.8\textwidth}
\centering
\resizebox{0.99\columnwidth}{!}{%
\begin{tabular}{l|cccccc}
Dataset:   & \multicolumn{6}{c}{CIFAR-S}                                                                                            \\ \hline
DLLP       & \textbf{93.87 \tiny(0.11)}  & \textbf{92.12 \tiny(0.24)} & \textbf{88.63 \tiny(0.51)} & 79.58 \tiny(1.34) & 52.01 \tiny(8.56) & 57.21 \tiny(6.50) \\
GenBags    & 92.36 \tiny(0.50)  & 90.10 \tiny(0.39) & 86.78 \tiny(0.33) & 82.69 \tiny(1.01) & \underline{68.45 \tiny(3.79)} & 60.43 \tiny(4.49) \\
EasyLLP    & 85.54 \tiny(1.006) & 74.79 \tiny(2.17) & 65.26 \tiny(3.51) & 61.57 \tiny(9.88) & 62.46 \tiny(5.21) & 52.32 \tiny(3.04) \\
LLP-FC     & *                  & 85.58 \tiny(0.31) & 80.59 \tiny(0.56) & 75.62 \tiny(1.21) & 65.75 \tiny(2.36) & 63.76 \tiny(1.26) \\
LLP-VAT    & 90.10 \tiny(0.49)  & 83.20 \tiny(0.16) & 64.76 \tiny(3.06) & !                 & !                 & !                 \\
Ours-Itr-1 & 93.53 \tiny(0.39)  & 91.29 \tiny(0.36) & 88.17 \tiny(0.59) & \underline{83.49} \tiny(1.53) & \textbf{74.45 \tiny(2.58)} & \underline{71.01 \tiny(2.21)} \\
Ours-Itr-2 & \underline{93.64 \tiny(0.31)}  & \underline{91.31 \tiny(0.33)} & \underline{88.31 \tiny(0.41)} & \textbf{84.30 \tiny(1.28)} & \&                & \textbf{71.17 \tiny(2.13)}
\end{tabular}%
}
    \vskip -\baselineskip
\vspace{-1mm}
\end{minipage}

\end{table}
Among the 2 image datasets, CIFAR-S has a large label skew.
Our methods outperforms all baseline for CIFAR-S in the large bag size regime ($BagSize \geq 512$), where the performance of all other methods drop significantly. Specifically, we outperform 
DLLP by upto \textbf{20\%} and GenBags   by upto \textbf{7\%}. For small bags, our method performs comparable to the best baseline. Only for CIFAR-B that has label balance, GenBags is better than our method for larger bag sizes while DLLP outperforms slightly for lower bag sizes. However, even in this case, our method is competitive (close second mostly) with the best across bag sizes. 

\textbf{Note: }We justify our chosen hyper-parameters in our experiments via approximate convergence analysis in the supplementary section \ref{sec:ACA} providing theoretical backing for our strong empirical results.

\section{Ablations}\label{sec:ablations}
Here we provide ablations regarding the two most important ideas in our algorithm: 1) Nearest neighbor based nearness constraints and 2) Aggregate Embedding. Due to space constraints, we do provide a number of other ablations in the appendix regarding adding noise to embeddings (\ref{sec:noisy}), hard vs soft thresholding of BP labels (\ref{sec:softvhard}), performance change with different choice of kernels (\ref{sec:sim-metric}) and distance functions (\ref{subsubsec:dist-metric}) among others. In the supplement, we further report various performance metrics of our algorithm such as performance of BP pseudo labels in itself compared to ground truth (Section \ref{sec:auroc}) and convergence in very few iterations (typically 1-2) (Section \ref{sec:2_step_convergence}).

\subsection{Time Complexity}\label{sec:time_complexity}
For the problem we consider in the paper, we have two terms in the BP formulation: KNN based nearness constraints and {b}ag constraints. There are $m/B$ bags each having $B^2$ pairwise terms giving rise to $mB$ pairwise terms (here $m$ is the dataset size and $B$ is the bag size). Similar analysis gives $mk$ pairwise terms for the KNN constraints.
So we have an Ising Model with $O(m(B+k))$ pairwise terms. Drawn as an undirected graph, the degree is linear in only $B + k$
BP message passing complexity per node per iteration is also $O(B+k)$. Any implementation will only have this much complexity per node per iteration of BP. \texttt{JAX} \citep{jax2018github} implementation in \texttt{PGMax} \citep{zhou2022pgmax} does an efficient update for all nodes.
This is line with increase in complexity of the BP step (Column 3) for Criteo in Table \ref{tab:wall_clock_times-criteo}. Additional wall clock time results and discussion (that shows linear scaling in bag size for Adult dataset) is in the supplement section \ref{sec:wall_clock_time}, Table \ref{tab:wall_clock_times-adult}.   This establishes the feasibility of our method on larger datasets and larger bag sizes as well.
\begin{table}[t!]
\vspace{-6mm}
\caption{Time for various parts of our algorithm compared to time taken by other methods on Criteo Dataset. All time values are in seconds. Note that \textit{Data Setup time is common} to All Methods}
\label{tab:wall_clock_times-criteo}
\centering
\resizebox{1\columnwidth}{!}{%
\begin{tabular}{c|ccccccc}
         & \multicolumn{7}{c}{\textbf{Criteo} $\sim$1m Samples}                                     \\ \hline
Bag Size & DLLP Training &  {EasyLLP Training}  &  {GenBags Training}    & {Data Setup} & Ours - BP        & Ours - MLP      & Ours - Total     \\ \hline
8        &  725.24 (142.59) & {1617.67 (1597.14)} & {1760.33 (1077.05)}   & 1993.47 (96.05)   & 695.34 (266.7)   & 679.26 (215.35) & 3368.07 (466.34) \\
32       &  735.09 (207.50) & {911.86 (553.21)} &  {957.65 (497.01)}  & 1970.05 (104.65)  & 1279.83 (278.75) & 624.55 (187.30) & 3874.43 (469.92) \\
128      &  568.00 (79.14) & {777.82 (530.75)} & {729.41 (502.12)}  & 2192.84 (189.90)  & 3590.73 (528.85) & 588.00 (157.84) & 6371.57 (727.31)
\end{tabular}%
}
    \vskip -\baselineskip
\vspace{-1mm}
\end{table}
\subsection{Importance of Nearest Neighbor constraints for BP}\label{sec:knn_constraints}

One of the main contributions in our algorithm is the usage of nearness of covariates to impose label similarity constraints in the Gibbs distribution is an unsupervised manner. We show how essential it is by removing a certain percentage of those constraints and studying degradation. %

\textbf{A good fraction of kNN constraints is necessary:} 
As depicted in figure \ref{fig:knn-percent}, we show how for a good fraction of instances neighbourhood information is essential for good performance of the pseudo labeling step using the \textit{Marketing} dataset. By simply retaining only 10\% of the pairwise covariate similarity constraints, we lose around \textbf{18\%} performance compared to when we use the entire set of pairwise covariate factors. This highlights the importance of the covariate information usage in our BP formulation.

\subsection{Learning Aggregate Embeddings Helps}
As we highlight in figure \ref{fig:agg_embed_type}, the addition of the additional \textit{bag loss} head in our Aggregate embedding loss in \eqref{eq:agg-emb} pipeline helps improve performance across both smaller and larger bags. Our choice of average pooling of different instances at bag level during the supervised learning provides the best performance. We also note that we experiment with a much more complex choice for aggregation of instances within a bag like using Multi-Head-Attention (\textit{MHA}). This leads to slight degradation in the instance wise performance. We describe the architecture for this choice in the supplement.
\begin{figure}[h!]
    \vspace{-0.15cm}%
     \centering
    \includegraphics[width=0.8\textwidth, trim={0.5cm 0.5cm 0.5cm 0.53cm},clip]{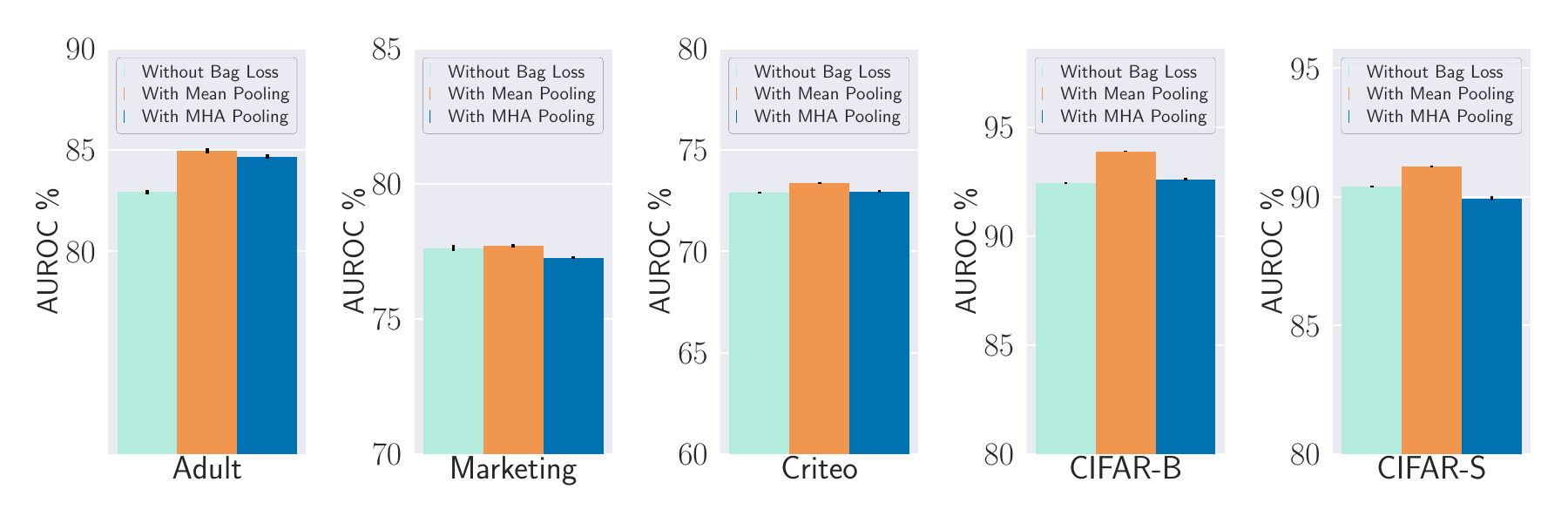}
    \vskip -\baselineskip
    \caption{Comparison of the performance on different types of the pooling for the aggregate-embedding loss across different datasets for bag-size 512 for Adult and Marketing and bag-size 32 for Criteo, CIFAR-B and CIFAR-S at the end of iteration $1$.}
    \vskip -\baselineskip
\vspace{-0.2cm}
    \label{fig:agg_embed_type}
\end{figure}

\section{Conclusion}

Thus we have provided a highly generalizable algorithm to perform efficient learning from label proportions. We utilised Belief Propagation on parity like constraints derived from covariate information and bag level constraints to obtain pseudo labels. Our unique Aggregate Embedding loss used instance wise pseudo labels and bag level constraints to output a final predictor. We have also provided an theoretical insight into why our approach works through varied ablations on different components and extensive experimental comparisons against several SOTA baselines across various datasets of different types.
\section*{Ethics Statement}
The algorithm we propose is implementable over datasets of various kinds, including but not limited to tabular and vision and we demonstrate the efficacy of our approach via comparisons on several such datasets while beating several strong LLP methods. To the best of our knowledge, our work does not raise any ethical concerns.
\section*{Reproducibility Statement}
We have described in detail the implementation details for the reproducibility of the experiments in the main paper Section \ref{subsec:exp_set} supplementary material Section \ref{subsec:implementation}. We provide the hyperparameter ranges and experimental methodology in section \ref{subsec:exp_set} and additional information, including that for the baselines in supplementary section \ref{subsec:implementation}. We have provided extensive set of hyperparameters in \ref{sec:hyperparam} to reproduce our algorithm's numbers.
We will soon publicly release the source code.

\bibliographystyle{iclr2024_conference}

\bibliography{mybib}
\newpage
\newpage
\appendix
\section{Additional Analysis}
\subsection{Additional Ablations}\label{sec:add_ablations}
\subsubsection{1-Nearest Neighbor captures a lot of the gains} \label{sec:1nn} We now show that using 1 Nearest Neighbour Information for the \texttt{SUM-PRODUCT-BP} is empirically close enough to the best performance we get on the best $k$ chosen for $k$-nearest neighbors used in \texttt{SUM-PRODUCT-BP} by hyperparameter search. From Figure \ref{fig:1-nn}, we see that across multiple datasets test metrics are very close for the two settings. This shows lower $k$ which reduces time complexity of the BP step (see section \ref{sec:time_complexity}) does not affect the performance greatly.

\begin{figure}[h!]
    \vspace{-0.35cm}%
     \centering
    \includegraphics[width=1\textwidth, trim={0.5cmm 0.5cm 0.5cm 0}]{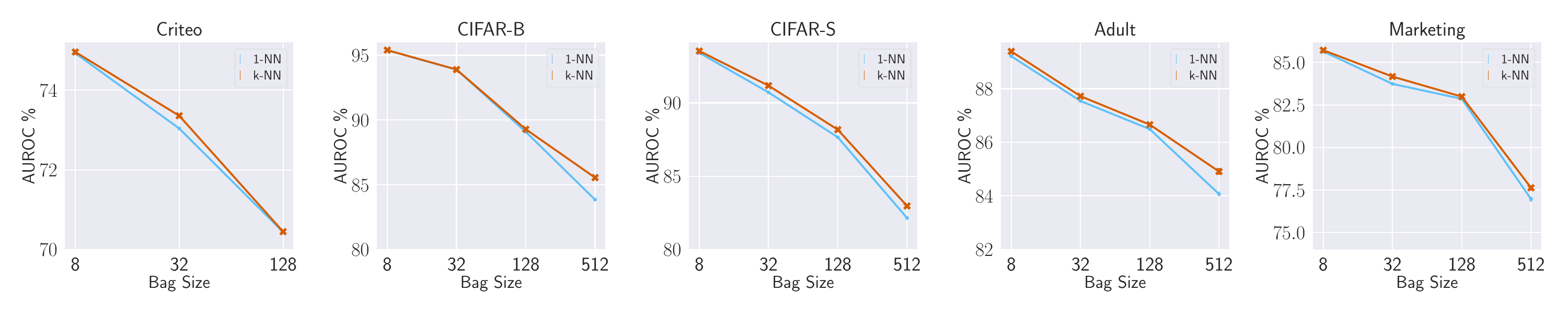}
    \vskip -\baselineskip
    \caption{Change in values of Test AUROC when using only 1 nearest neighbour for the covariate factor creation, v/s when using an optimal higher $k$ for the covariate factor creation.}
\vspace{-1mm}
    \label{fig:1-nn}
\end{figure}
\subsubsection{Noisy Embeddings as Input}\label{sec:noisy}
We add noise variables sampled from ${\cal N}(0,\sigma^2 I_d)$ to our features input to the first iteration and report the numbers obtained in 2nd-Iteration supervised learning step in figure \ref{fig:noise}. Medium Noise regime corresponds to $\sigma = 0.05$ and High Noise regime corresponds to $\sigma=0.1$. As is clearly visible our method is able to recover  performance even when using noisy inputs. We would like to note that there is some degradation at bag size level $512$. We point out that this is case where there about $\sim 100$ bag level labels in total. Covariate information is rather crucial to make any progress. Hence noise addition has the most impact in this regime. This also suggests importance of using covariate information for large bag sizes due to very weak supervision available. The drop in performance due to noisy  embeddings is significantly higher for the {\em Marketing} dataset than the {\em Adult} dataset. This shows that the coariates are very important for the Marketing datasets and our method exploits it very well. We posit thatt his could be the reason our method    significantly outperforms others on the {\em Marketing} dataset  (See Table~\ref{tab:tabular-data-uci}).
\begin{figure}[h!]
    \vspace{-0.4cm}%
     \centering
    \includegraphics[width=0.8\textwidth]{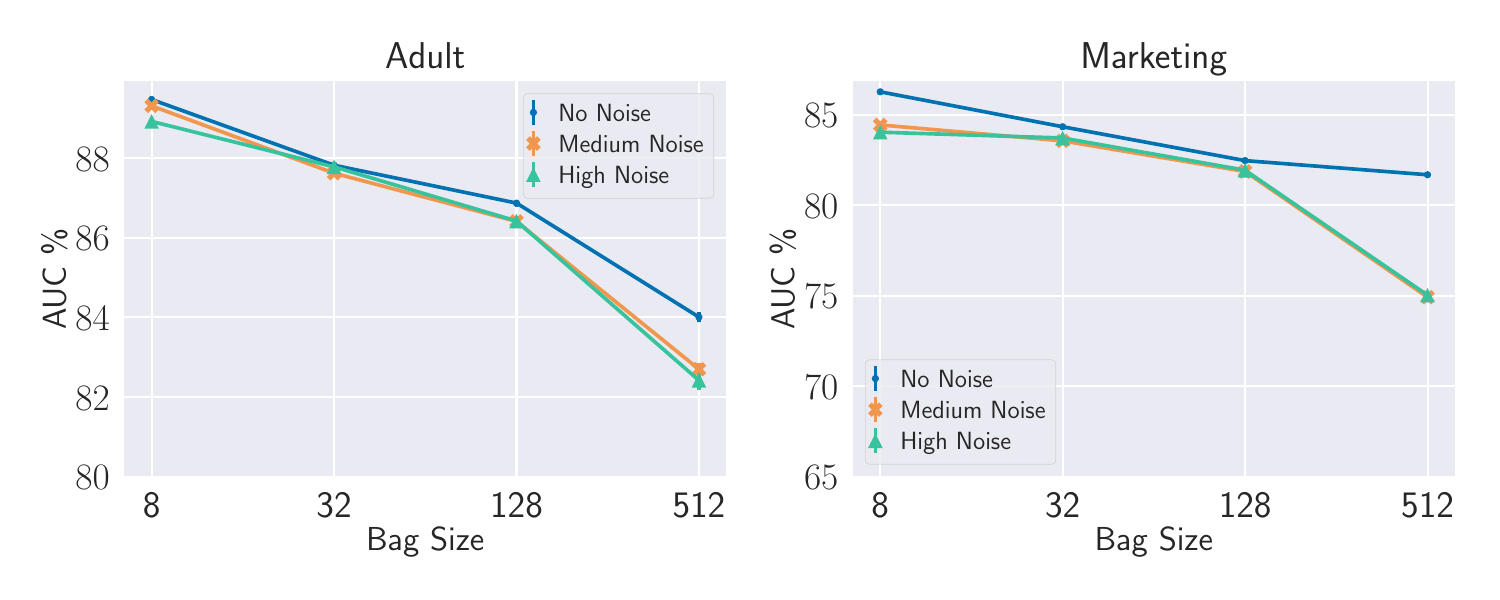}
    \vskip -\baselineskip
    \caption{Recovered performance on adding noise to the initial embeddings.}
    \vskip -\baselineskip
\vspace{-1mm}
    \label{fig:noise}
\end{figure}

\subsubsection{Soft Weighted Hard Threshold v/s Vanilla Hard Threshold}\label{sec:softvhard}
In figure~\ref{fig:soft-hard} we report the numbers obtained on using the soft-labels from the BP-Marginals as opposed to hard thresholding them. We use the soft labels in two ways, directly to train the MLP using a sigmoid cross entropy loss formulation as opposed to the usual binary cross entropy loss, and the other for weighing the hard-labels by $|p - \tau|$ where $p$ is the soft label and $\tau$ the threshold for creating the hard labels. We do not notice any consistent improvements on using the soft labels in either form across datasets and bag sizes and thus stick to hard labels for our setup.
\begin{figure}[h!]
     \centering
    \includegraphics[width=0.6\textwidth]{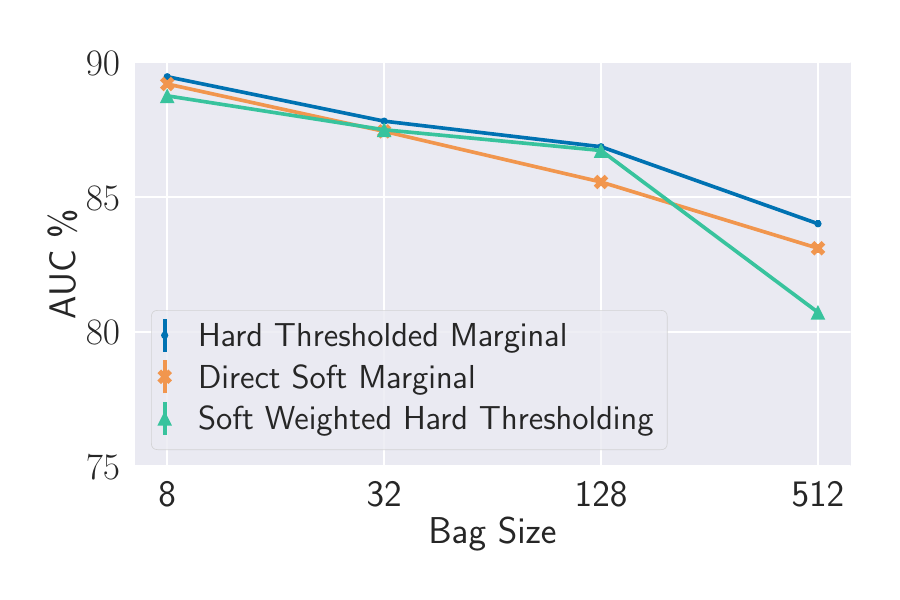}

    \caption{Change in values of MLP AUC when using Soft Weighted Hard Thresholding for the conversion of BP-Marginals to pseudolabels v/s Directly using the soft marginals v/s using Hard Thresholding on Adult Dataset for 2nd Iteration.}
    \label{fig:soft-hard}
\end{figure}
\subsubsection{Distance Metric: Cosine v/s L2}\label{sec:cosine}
\label{subsubsec:dist-metric}
As mentioned earlier, we experiment with using Cosine and L2 distance, $d(\cdot, \cdot)$ for the construction of our neighbour graph. While we don't find significant differences on using the two methods, using Cosine led to better downstream performance across datasets and bag sizes. This can be interpreted to be due to the better neighbour graph construction as depicted in Figure~\ref{fig:dist-metric} for Adult, by better Test Score (Accuracy) of the kNN constructed by the two distance metrics for varying number of neighbours (k) for the construction of the neighbour graph.
\begin{figure}[h!]

     \centering
    \includegraphics[width=0.6\textwidth]{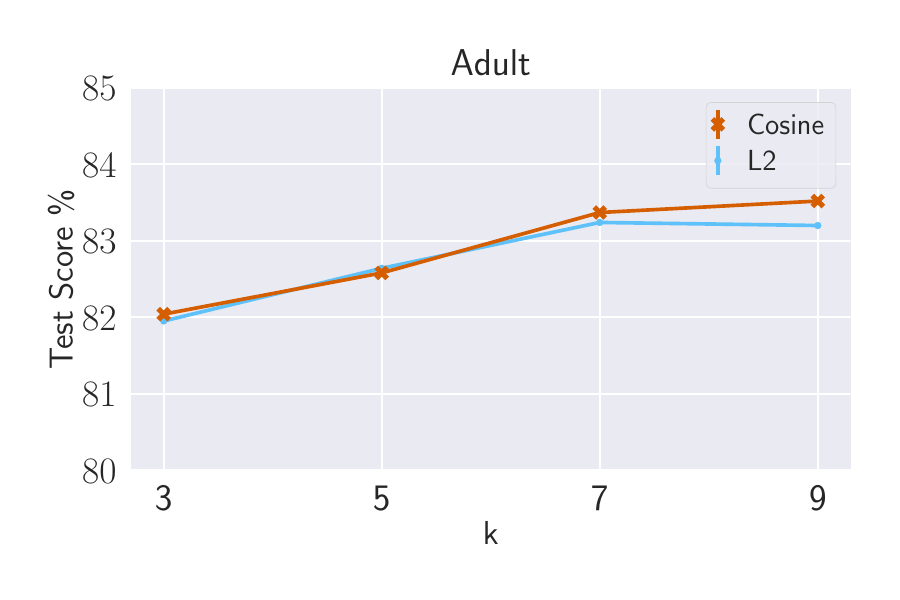}

    \caption{Variation in the kNN Score across bag sizes compared for the two popular distance metrics, Cosine and L2 on Adult Dataset as $d(\cdot, \cdot)$ for the neighbour graph creation.}
\vspace{-1mm}
    \label{fig:dist-metric}
\end{figure}

\subsubsection{Similarity Kernel: RBF v/s Matern}
\label{sec:sim-metric}
As discussed earlier, we tried both the RBF Kernel and Matern Kernel for our experiments and note as in figure~\ref{fig:sim-metric} for Criteo, that the Matern Kernel resulted in slightly better Test AUC \% Scores across various datasets and bag sizes. While there is no substantial increase in our performance the marginally better numbers can be attributed to the fact that the Matern Kernel is a generalization of the RBF Kernel and might capture the similarity information between the embeddings more aptly.
\begin{figure}[h!]

     \centering
    \includegraphics[width=0.6\textwidth]{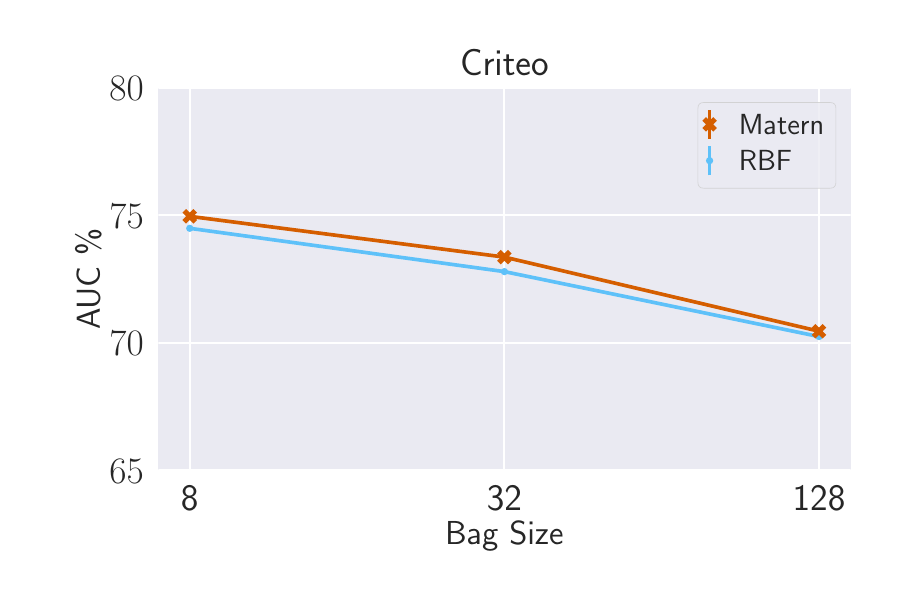}

    \caption{Variation in the Test AUC \% of the MLP after 1st Iteration on using RBF Kernel v/s on using Matern Kernel as $k(\cdot, \cdot)$ for similarity calculation during formation of pairwise factors on Criteo.}
\vspace{-1mm}
    \label{fig:sim-metric}
\end{figure}

\subsubsection{Optimal Values of $\lambda_a$}
\label{sec:lambda}
As observed in Figure~\ref{fig:loss-lambda}, it is clear that the bag-loss head plays a more important role when the bag sizes are smaller. The optimal value of $\lambda_a$ is dependent on the requirement of the reinforcement of the bag constraint via the bag-loss head. For small bags, the bag constraints hold much more information than that for large bags, and hence are more useful. In the case of small bags, during aggregation of embeddings, more information is retained and utilized downstream since fewer embeddings are pooled. We pool 8 embeddings per bag for bag size 8 and 512 embeddings are pooled for bag size 512, clearly there is a stark divide in the information summarized across bag sizes. This trend is consistently noticed across multiple datasets. We also notice that the optimal values of $\lambda_a$ are comparatively lower for the 2nd Iteration of our method. We 
think this might be due to the refined embeddings and neighbour graph in the 2nd iteration of Belief Propagation.

\begin{figure}[h!]

     \centering
    \includegraphics[width=0.9\textwidth]{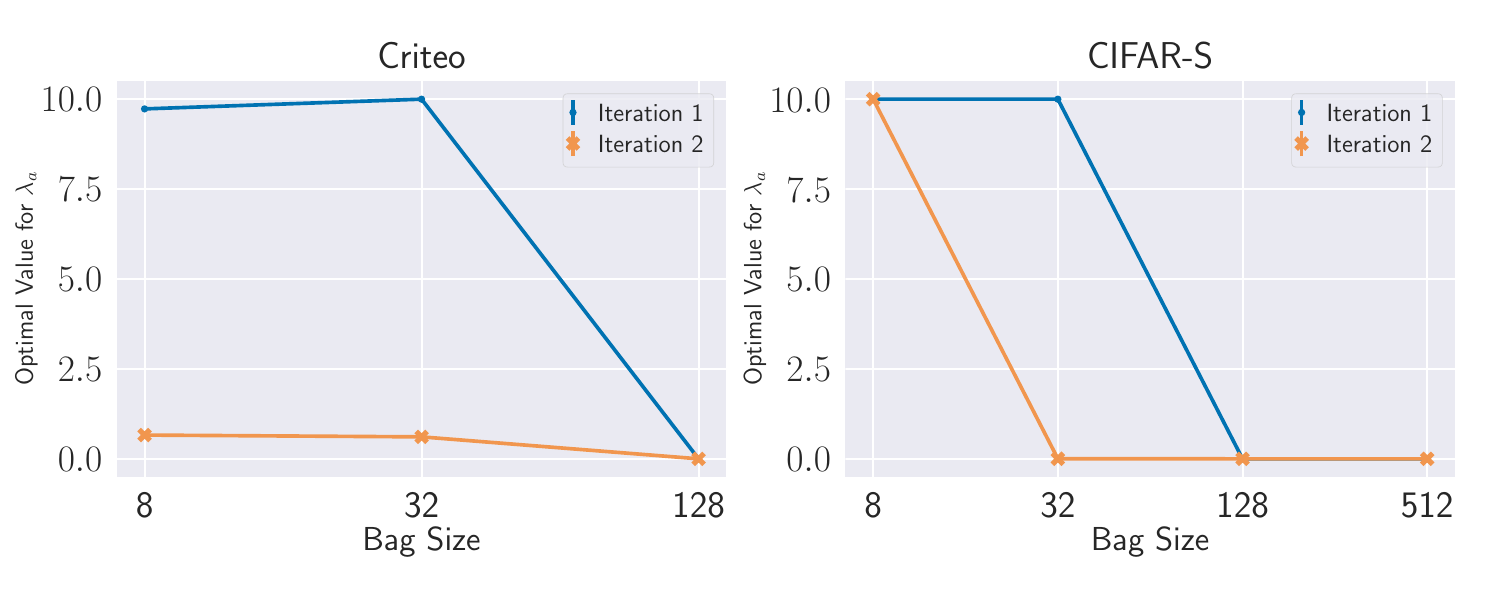}

    \caption{Variation in the Test AUC \% of the MLP after 1st Iteration on using RBF Kernel v/s on using Matern Kernel as $k(\cdot, \cdot)$ for similarity calculation during formation of pairwise factors on Criteo.}
\vspace{-1mm}
    \label{fig:loss-lambda}
\end{figure}

\subsection{Extended Experimentation}
\subsubsection{Wall Clock Times on Adult Dataset}\label{sec:wall_clock_time}
We have wall clock time reported in Table \ref{tab:wall_clock_times-criteo} (main paper) and Table \ref{tab:wall_clock_times-adult} for Criteo and Adult respectively. Criteo has 1 million samples in total in the training. For bag size 32, the wall clock time of the entire BP stage is only 1279s for millions of factors in the factor graph inclusive of the creation and iteration. For smaller datasets like Adult with 50k samples, even on bag size as large as 2048, the BP stage takes only 1054s on a NVIDIA P100 GPU, which is a minimal computational overhead above standard supervised learning. It is also conventionally known that BP on sparse graphs is very fast and our sparsity is controlled by using K-NN instead of all pairs in imposing nearness constraints. We reiterate that the number of factors in the factor graph is thus only linear in the number of samples allowing for faster iterations. The time taken in the entire BP section, right from creation of the factor graph to message passing for 100 iterations happens in the order of $O(\mathrm{bag\_size})$ seconds for the Adult dataset which is exceptionally fast as simple MLP training itself takes $O(10^2)$ seconds.
\begin{table}[h!]
\caption{Time for various parts of our algorithm compared to time taken by {other methods} on Adult Dataset. All time values are in seconds. {Note that Data Setup time is common to All Methods} }
\label{tab:wall_clock_times-adult}
\centering
\resizebox{\columnwidth}{!}{%
\centering
\begin{tabular}{c|ccccccc}
         & \multicolumn{7}{c}{Adult $\sim$50k Samples}                                                                                                                   \\ \hline
Bag Size & DLLP Training  & {EasyLLP Training}       & {GenBags Training}         & {Data Setup} & Ours - BP       & Ours - MLP     & Ours - Total     \\ \hline
8        & 52.23 (51.38)  & {47.85 (28.93)} & {256.94 (495.04)} & 630.88 (827.63)                                      & 12.34 (8.57)    & 85.00 (145.57) & 728.21 (863.69)  \\
32       & 45.4 (69.04)   & {26.51 (12.67)} & {125.67 (102.96)} & 523.59 (449.32)                                      & 21.11 (14.65)   & 54.38 (77.15)  & 599.08 (458.12)  \\
128      & 73.79 (125.96) & {21.38 (15.99)} & {88.72 (76.21)}   & 623.20 (458.23)                                      & 71.16 (8.89)    & 54.34 (63.55)  & 748.71 (459.86)  \\
512      & 25.11 (40.15)  & {18.84 (9.42)}  & {85.52 (74.49)}   & 767.58 (431.50)                                      & 269.88 (32.52)  & 65.07 (158.34) & 1102.53 (459.12) \\
1024     & 26.39 (38.06)  & {20.93 (14.51)} & {95.79 (103.85)}  & 645.04 (674.94)                                      & 550.32 (113.28) & 32.42 (25.76)  & 1227.78 (756.34) \\
2048     & 24.34 (28.32)  & {18.77 (6.69)}  & {86.14 (71.80)}   & 612.87 (539.07)                                      & 1054.47 (73.71) & 49.93 (78.34)  & 1717.27 (534.88)
\end{tabular}%
}
\end{table}

\subsubsection{Convergence of the two step method}\label{sec:2_step_convergence}
In Table \ref{tab:convergence} empirically we demonstrate that our method converges in two iterations by looking at relative improvements between iterations 2 and 3. We demonstrate that two iterations of our algorithm suffice empirically.
\begin{table}[h!]
\caption{Empirical Convergence: The \% AUC scores for varying bag sizes for 2 Iterations and 3 Iterations of our algorithm.}
\label{tab:convergence}
\centering
\begin{tabular}{c|cccc|cccc}
         & \multicolumn{4}{c|}{Adult}                                        & \multicolumn{4}{c}{Marketing}                                     \\ \hline
Bag Size & 8              & 32             & 128            & 512            & 8              & 32             & 128            & 512            \\ \hline
Itr-2    & \textbf{89.47} & \textbf{87.82} & 86.87          & 84.01          & \textbf{86.26} & \textbf{84.33} & \textbf{82.46} & \textbf{81.68} \\
Delta    & -0.29          & -1.26          & 0.49           & 0.71           & -0.26          & -0.94          & -0.13          & -1.05          \\
Itr-3    & 89.18          & 86.56          & \textbf{87.36} & \textbf{84.72} & 86.00             & 83.39          & 82.33          & 80.63         
\end{tabular}%
\end{table}

As visible, performance gains from 2nd to 3rd iterations are not consistently better. Thus there is no clear reason to run higher iterations of the algorithm, as a maximum of 2 iterations suffice to achieve significantly consistent performance.
\subsubsection{Goodness of Pseudo Label of BP}\label{sec:auroc}
We report the AUROC of BP after the first iteration with respect to the true labels in Table \ref{tab:bp-auroc}. It is considerable indicating that it has good ordering information (ranking of samples belonging to class 1 above class 0). The effect of high quality pseudo labels is reflected in the downstream performance.
\begin{table}[h!]
\caption{The \% AUC scores of the pseudo labels obtained from the Belief Propagation algorithm when compared to the ground truth labels for iteration 1 of the algorithm}
\label{tab:bp-auroc}
\centering
\begin{tabular}{c|cccccc}
Bag Size & 8     & 32    & 128   & 512   & 1024  & 2048  \\ \hline
Adult     & 86.34 & 79.63 & 75.25 & 75.02 & 67.88 & 63.54 \\
Marketing & 88.53 & 78.26 & 75.5  & 75.66 & 74.9  & 74.64
\end{tabular}%
\end{table}

While Table \ref{tab:bp-auroc} only reports after Step 1 of iteration 1 to showcase value of the BP step, the second aggregate embedding loss based MLP training Step 2 boosts performance of Step 1 further and we do see it as expected in Table \ref{tab:tabular-data-uci}. Refer to Section \ref{sec:algo}, subsection \ref{subsec:step-1} for Step 1, and subsection \ref{subsec:step-2} for description of these steps in our algorithm.

Step 2 is necessary because information from pseudo labels may not satisfy bag constraints exactly. So we have a composite loss that again imposes the bag constraint through an aggregate embedding loss (see Equation \ref{agg:op} and Equation \ref{eq:agg-emb} in the paper in Section \ref{subsec:step-2})

\subsubsection{Noisy Labels and Privacy}

In this section we explore utility-privacy tradeoff of our algorithm when we add noise to label proportions for every bag by Gaussian Mechanism to achieve a target \textit{label differential privacy} of $(\epsilon,\delta)$ by using the following result:
\begin{theorem}[Theorem 2 in~\citep{dwork2014analyze}]

Let $f: \mathcal{A} \rightarrow \mathbb{R}$ be a real-valued function. Let $\tau = \Delta f \sqrt{2 \ln (1.25/\delta)}/\epsilon$. The Gaussian Mechanism, which adds independently drawn random noise distributed as $\mathcal{N}(0, \tau^2)$ to output of $f(A)$, ensures $(\epsilon, \delta)$-differential privacy.
\end{theorem}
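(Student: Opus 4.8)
The plan is to reduce the privacy guarantee to a tail estimate on the \emph{privacy loss random variable}, treating the scalar case directly since $f$ is real-valued. On input $A$ the mechanism outputs $f(A)+\nu$ with $\nu\sim\mathcal{N}(0,\tau^2)$, so for any pair of adjacent databases $A,A'$ the shift $\Delta = f(A)-f(A')$ satisfies $|\Delta|\le\Delta f$ by definition of sensitivity. I would first write down the privacy loss at an observed outcome,
\[
L = \ln\frac{\exp(-\nu^2/(2\tau^2))}{\exp(-(\nu+\Delta)^2/(2\tau^2))} = \frac{2\nu\Delta + \Delta^2}{2\tau^2},
\]
which is an affine (hence Gaussian) function of the noise $\nu$. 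The key reduction is the standard lemma that if $\Pr[L > \epsilon]\le\delta$, then the mechanism is $(\epsilon,\delta)$-differentially private: one partitions any event $S$ into the region where the density ratio is at most $e^\epsilon$ and its complement, bounds the first part by $e^\epsilon\Pr[M(A')\in S]$ and the second by $\Pr[L>\epsilon]\le\delta$. So it suffices to control the upper tail of $L$.

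Second, I would convert the event $\{L>\epsilon\}$ into a threshold event on $\nu$. Taking the worst case $\Delta=\Delta f$ (the sign is handled by symmetry), the inequality $L\le\epsilon$ is equivalent to
\[
\nu \le \frac{\epsilon\tau^2}{\Delta f} - \frac{\Delta f}{2} =: t .
\]
Substituting the prescribed $\tau = \Delta f\sqrt{2\ln(1.25/\delta)}/\epsilon$ collapses the threshold to $t = \frac{2\Delta f\,\ln(1.25/\delta)}{\epsilon} - \frac{\Delta f}{2}$, and the goal becomes showing $\Pr[\nu > t]\le\delta$.

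Third, I would bound this Gaussian tail with the sub-Gaussian estimate $\Pr[\nu>t]\le \frac{\tau}{t\sqrt{2\pi}}\exp(-t^2/(2\tau^2))$, plug in the value of $\tau$, and verify the right-hand side is at most $\delta$. This final verification is the main obstacle: after taking logarithms it reduces to a one-variable inequality in $\epsilon$ and $\ln(1.25/\delta)$ that must be checked to hold on the regime $\epsilon\in(0,1]$ (the classical assumption the statement implicitly relies on). The delicacy is that the crude tail bound is not tight enough on its own, so the lower-order terms in $t^2/(2\tau^2)$ must be absorbed carefully; this is precisely where the numerical constant $1.25$ (rather than $1$) inside the logarithm earns its keep, giving the slack needed to close the inequality.
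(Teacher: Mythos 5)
The paper never proves this statement: it is imported verbatim, with citation, as Theorem~2 of Dwork et al.\ and is used only to calibrate the Gaussian noise added to bag label proportions in the privacy experiments, so the only proof to compare against is the canonical one in the cited differential-privacy literature. Your plan reproduces that canonical argument exactly -- the privacy-loss random variable $L = (2\nu\Delta + \Delta^2)/(2\tau^2)$, the reduction lemma turning $\Pr[L > \epsilon] \le \delta$ into $(\epsilon,\delta)$-DP via the partition of $S$, the worst-case reduction to $\Delta = \Delta f$ (which is valid, since the threshold $t$ is decreasing in $\Delta$ and the negative-$\Delta$ case follows by Gaussian symmetry), and the Mills-ratio tail bound. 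The one piece you defer, checking that $\frac{\tau}{t\sqrt{2\pi}}e^{-t^2/(2\tau^2)} \le \delta$ after substituting $\tau = \Delta f\sqrt{2\ln(1.25/\delta)}/\epsilon$, is in fact the longest computation in the original proof and does require the restriction $\epsilon \le 1$ that you correctly identify as implicit in the statement; your scaffolding around it is sound, so this is an incompleteness of execution rather than a flaw of approach.
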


We take $\mathcal{A}$ to the set of true labels of instances in bag $S \in {\cal B}$, $f(\cdot) = \frac{y(S)}{B}$, the label proportion. We observe that sensitivity of the label proportion to change in a single label is $\Delta f = \frac{1}{B}$, where $B$ denotes the bag size. Standard deviation of the noise added is proportional to $1/B$ for a fixed $\epsilon, \delta$. 

We demonstrate the following interesting privacy-utility tradeoff: utility degradation, as measured by Test AUC, due to Gaussian Mechanism is much more in smaller bags as compared to larger bags for a target privacy level. Through this, we empirically verify the intuition that points to the fact that larger bags offer better privacy. We note that our algorithm performs much better in large bags regime compared to baseline and we conjecture that this is crucial to utilize the better privacy utility degradation tradeoffs at larger bag sizes.
 
 We experiment with 2 sets of differential privacy parameters, Medium Noise: $(\delta,\epsilon) = (10^{-5},10)$ and High Noise: $(\delta,\epsilon) = (10^{-5},1)$, both popular choices in literature~\citep{papernot2016semi} Note that, bag size $B$ takes values in $\{8, 32, 128, 512\}$. Our results are reported in Table \ref{tab:noisy}.
 
\begin{table}[h!]
\caption{Test AUROC scores after Iteration-1 of our method across different bag sizes for varying levels of label-noise.}
\centering
\label{tab:noisy}
\resizebox{\columnwidth}{!}{%

\begin{tabular}{c|ccc|ccc|ccc}
Dataset:      & \multicolumn{3}{c|}{Criteo}                                                                                                              & \multicolumn{3}{c|}{CIFAR-B}                                                                                                             & \multicolumn{3}{c}{CIFAR-S}                                                                                                             \\ \hline
Noise: & \multicolumn{1}{c}{Noiseless}    & \multicolumn{1}{c}{Medium} &  High & \multicolumn{1}{c}{Noiseless}    & \multicolumn{1}{c}{Medium} & High & \multicolumn{1}{c}{Noiseless}    & \multicolumn{1}{c}{Medium} & High \\ \hline
8            & \multicolumn{1}{c}{74.96 \tiny(0.01)} & \multicolumn{1}{c}{74.83 \tiny(0.07)}                           &     71.06 \tiny(0.15)                        & \multicolumn{1}{c}{95.39 \tiny(0.01)} & \multicolumn{1}{c}{94.80 \tiny(0.04)  }                            &   90.99 \tiny(0.1)                       & \multicolumn{1}{c}{93.53 \tiny(0.03)} & \multicolumn{1}{c}{93.28 \tiny(0.28)}                           &  87.07 \tiny(0.46)                          \\ %
32           & \multicolumn{1}{c}{73.36 \tiny(0.03)} & \multicolumn{1}{c}{72.43 \tiny(0.02)}                           &    70.40 \tiny(0.03)                        & \multicolumn{1}{c}{93.89 \tiny(0.02)} & \multicolumn{1}{c}{93.36 \tiny(0.05)}                           &    90.25 \tiny(0.06)                        & \multicolumn{1}{c}{91.17 \tiny(0.03)} & \multicolumn{1}{c}{91.07 \tiny(0.21)}                           &      86.08 \tiny(0.07)                      \\ %
128          & \multicolumn{1}{c}{70.45 \tiny(0.05)} & \multicolumn{1}{c}{69.45 \tiny(0.20)}          &     69.53 \tiny(0.21)                         & \multicolumn{1}{c}{89.28 \tiny(0.05)} & \multicolumn{1}{c}{88.92 \tiny(0.13)}         &  87.79 \tiny(0.09)                          & \multicolumn{1}{c}{88.17 \tiny(0.19)} & \multicolumn{1}{c}{87.39 \tiny(0.10)}                           &  85.17 \tiny(0.23)                          \\ %
512          & \multicolumn{1}{c}{-}            & \multicolumn{1}{c}{-}                                      & -                                       & \multicolumn{1}{c}{85.55 \tiny(0.75)} & \multicolumn{1}{c}{83.29 \tiny(0.32)}                           & 84.65 \tiny(0.21)                           & \multicolumn{1}{c}{82.97 \tiny(0.33)} & \multicolumn{1}{c}{81.32 \tiny(0.37)}                           &  79.39 \tiny(0.58)                          \\ %
\end{tabular}
}
\end{table}

We also want to highlight that under the effect of both medium and high noise our method recovers performance up to a reasonable degree, especially for larger bag sizes which as stated earlier are more important from a privacy perspective.

{\subsubsection{MAP Decoding} \label{sec:map-decoding}
Table \ref{tab:mp-bp-adult} and Table \ref{tab:mp-bp-marketing} denote the Test AUC \% on performing Max Product BP on the Gibbs Distribution, and highlight the noisy nature of the performance. This establishes the superiority of Sum Product BP, that is our approach, to obtain consistently good performance across bag sizes and datasets. On the other hand, the single label configuration obtained from the Max Product approach is noisy and does not consistently retrieve the same performance as in the Sum Product approach across bag sizes.}

{One plausible reason is that given the nature of the weak supervision, i.e. aggregate labels being available only at the level of bags, it is better to find out uncertainty in a label for an instance (marginalize) rather than commit to a MAP configuration with respect to a Gibbs distribution that is uncertain.}

\begin{table}[h!]
\centering
\caption{{MaxProduct BP on Adult Dataset}}
\label{tab:mp-bp-adult}
\begin{tabular}{r|cc}
\multicolumn{1}{l|}{Bag Size} & Test AUC Itr-1 & Test AUC Itr-2 \\ \hline
8                             & 89.15          & 88.92          \\
32                            & 75.29          & 87.9           \\
128                           & 76.08          & 84.67          \\
512                           & 73.77          & 73.82          \\
1024                          & 80.43          & 76.65          \\
2048                          & 74.45          & 68.69         
\end{tabular}
\end{table}
\begin{table}[t!]
\centering
\caption{{MaxProduct BP on Marketing Dataset}}
\label{tab:mp-bp-marketing}
\begin{tabular}{r|cc}
Bag Size & Test AUC Iter 1 & Test AUC Iter 2 \\ \hline
8        & 85.55           & 85.73           \\
32       & 83.99           & 83.67           \\
128      & 82.64           & 82.39           \\
512      & 72.29           & 80.66           \\
1024     & 72.48           & 73.03           \\
2048     & 72.49           & 80.42          
\end{tabular}
\end{table}

\subsubsection{Extension to the Multiclass Paradigm} \label{sec:multi-class}
We adapted the Gibbs measure to the multi class setting as follows. Every point has $k$ labels : $y^1_i \ldots y^k_i$ corresponding to $k$ classes.

We have three main types of terms in the Gibbs measure. 
We impose a soft one hot constraint with the term: $(\sum_p y^p_i -1)^2$  

Nearness terms get modified as follows: $K(x_i,x_j) \sum_p (y^p_i - y^p_j)^2$, i.e. Euclidean distance between the vector labels of two points is small if they are nearby.

Aggregate Bag level constraints have counts $b_1  \ldots b_k$. Then we simply impose a least squares constraint:
                   $\sum_p  ( \sum_{i \in B} y^p_i - b_p)^2$
                   
All these terms are scaled by temperature hyper parameters which we search over during training. Essentially these are vectorized least squares constraints. 

The results on CIFAR10 as provided in Table \ref{tab:multiclass} depict that our method is slightly better or comparable to the SOTA.

\begin{table}[h!]
\caption{{\% Accuracy of our method against high performing baselines on CIFAR10 multiclass classification.}}
\label{tab:multiclass}
\centering
\begin{tabular}{c|cc}
Bag Sizes  & 8            & 32           \\ \hline
LLP-VAT      & 66.91 (1.23) & 60.85 (2.45) \\
LLP-FC      & 66.52 (2.12) & 62.35 (1.32) \\
DLLP       & 68.14 (0.48) & 62.87 (0.88) \\
Ours-Itr-1 & 68.05 (0.23) & 61.83 (0.21) \\
Ours-Itr-2 & 69.23 (0.12) & 61.92 (0.43)
\end{tabular}%
\end{table}

\subsection{Analysis of $1$-NN Graphs}

Section \ref{sec:1nn} shows that running our algorithm with $1$-NN for the BP step captures most of the performance in terms of the final Test AUC score. We show that many parts of the factor graph in the case of $1$-NN are cycle free.

Consider the bi-partite factor graph $K(V, {\cal B} \cup {\cal F},E)$ where variable nodes $V= [1:N]$ representing $\{x_i\}$ form one partition and bag factor nodes ${\cal B}$ and $1$-NN factor nodes ${\cal F} = \{f: (f,i), (f,j) \in E, ~x_{i} \in N_1(x_j) \lor x_{j} \in N_1(x_i) \}$ are on the other partition. Edges between a bag factor node $S$ and variable node 
$i$ exists if $i \in S$ ($x_i$ belong to bag $S$). 

In our setup (experiments), all bag factor nodes have disjoint neighbors since bags are formed randomly without replacement. Therefore, the bi-partite factor graph between ${\cal B}$ and $V$ is a forest. Now, consider the bi-partite factor graph between ${\cal F}$ and $V$. We now show that this is also a forest. Since every factor node connects only a pair of distinct nodes it is enough to show that the undirected $1$-NN graph does not have any cycles.

We now show that the $1$-NN graph does not have any undirected cycles. 
Recall that $N_1(x)$ is the nearest neighbor of point $x$.
\begin{lemma}
  For a set of points $\{x_i\}_{i=1}^N$, consider the following undirected graph $G(V,E)$ where $V=[1:N]$ and $E = \{(i,j): x_i \in N_1(x_j) \lor x_j \in N_1(x_i) \}$. For every node $i$, if the edge set is formed by choosing one amongst many equivalent nearest neighbors of $i$ appropriately (i.e. $N_1(x_i)$ is chosen to be a singleton), then $G$ does not have any cycles.
\end{lemma}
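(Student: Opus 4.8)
The plan is to reduce the claim to the standard fact that the directed nearest-neighbor graph is acyclic once ties are broken consistently. Since each factor node in $\mathcal{F}$ joins exactly one unordered pair of variable nodes, the bi-partite graph between $\mathcal{F}$ and $V$ is a forest precisely when the undirected $1$-NN graph $G(V,E)$ has no cycle, so it suffices to prove acyclicity of $G$. First I would pin down what "chosen appropriately" must mean: fix one strict global order $\prec$ on $[1:N]$ (e.g. by index) and define $N_1(x_i)$ to be the $\prec$-smallest element of $\argmin_{j\ne i} d(x_i,x_j)$, which is then a genuine singleton. More usefully, I would promote this to an injective rank function $\rho$ on unordered pairs: let $\rho(i,j)$ increase with $d(x_i,x_j)$ and break exact distance ties lexicographically via $\prec$, so that distinct pairs get distinct ranks and $N_1(x_i)=\argmin_{j\ne i}\rho(i,j)$. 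Every edge of $G$ now carries a distinct rank.

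With this setup I would argue by contradiction. Suppose $G$ contains a cycle $i_1 - i_2 - \cdots - i_k - i_1$ with $k\ge 3$ and all vertices distinct. Because $\rho$ is injective, among the $k$ cycle edges there is a \emph{unique} edge of maximal rank, say $(i_a,i_{a+1})$ with rank $R$; every other cycle edge has rank strictly below $R$. By the definition of $E$, this edge is present because $x_{i_{a+1}}=N_1(x_{i_a})$ or $x_{i_a}=N_1(x_{i_{a+1}})$.

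In the first case $i_{a+1}$ minimizes $\rho(i_a,\cdot)$, so $\rho(i_a,i_{a+1}) < \rho(i_a,i_{a-1})$ for the other cycle-neighbor $i_{a-1}\ne i_{a+1}$ of $i_a$; but $(i_a,i_{a-1})$ is a cycle edge, hence of rank below $R=\rho(i_a,i_{a+1})$, i.e. $\rho(i_a,i_{a-1}) < \rho(i_a,i_{a+1})$, a contradiction. The second case is symmetric: inspecting the other cycle-neighbor $i_{a+2}\ne i_a$ of $i_{a+1}$ gives $\rho(i_{a+1},i_a) < \rho(i_{a+1},i_{a+2})$ while $(i_{a+1},i_{a+2})$ has rank below $R$, again a contradiction. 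The distinctness of $i_{a-1}$ and $i_{a+2}$ from $i_{a+1}$ and $i_a$ is exactly what $k\ge 3$ guarantees. Either way the cycle cannot exist, so $G$ is a forest.

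I expect the genuine difficulty to sit entirely in the tie-handling hidden in the phrase "chosen appropriately," rather than in the max-edge argument itself. A purely vertex-local tie-break can fail: three points at the vertices of an equilateral triangle, each independently selecting a \emph{different} nearest neighbor, produce a bona fide $3$-cycle. The remedy, and the crux of the proof, is to derive all tie-breaks from a single global order so that the induced pair-ranking $\rho$ is injective; this both makes $N_1$ a true singleton and forces a unique maximal cycle edge, eliminating the degenerate "all cycle edges equal length" configuration that would otherwise survive. I would therefore make the consistency of the tie-breaking rule an explicit hypothesis of the lemma.
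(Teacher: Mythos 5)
Your proof is correct, and it takes a genuinely different route from the paper's. The paper works with the directed $1$-NN graph $G_d$ (each node points to its chosen nearest neighbor), classifies the possible cycle configurations into three types, and dispatches them separately: directed cycles of length $\geq 3$ are killed by a monotonicity claim (distances are non-increasing along any directed path, so a directed cycle forces all its distances to be equal, and such a tie-cycle can then be broken by re-assigning an equivalent nearest neighbor), collider-type cycles are killed by the out-degree-at-most-one property, and bidirected $2$-cycles collapse to a single undirected edge. Your argument instead stays entirely in the undirected graph: you build an injective rank $\rho$ on unordered pairs (distance first, a global order to break exact ties), observe that any cycle then has a \emph{unique} maximum-rank edge, and derive a contradiction by comparing that edge against the adjacent cycle edge at whichever endpoint did the selecting. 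This buys two things the paper's proof lacks: a single unified argument with no case analysis (colliders and directed cycles are handled identically, and the all-ties degenerate configuration never arises because injectivity of $\rho$ excludes it from the start), and a precise formalization of the lemma's vague hypothesis ``chosen appropriately'' --- you make the consistency of tie-breaking an explicit, checkable condition, whereas the paper only implicitly establishes that \emph{some} appropriate re-assignment exists when ties occur. What the paper's route buys in exchange is structural insight into the directed $1$-NN graph itself (the distance-monotonicity of directed paths, and the fact that cycles can only ever arise from exact distance ties), which your rank construction deliberately abstracts away. Your observation that a purely vertex-local tie-break fails (the equilateral triangle) is exactly the degenerate case the paper's Case 1 wrestles with, and your suggestion to state the consistent tie-breaking rule as an explicit hypothesis would strengthen the lemma as written.
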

\begin{proof}
Let us define a directed graph $G_d$, where the outgoing edge $(i \rightarrow j)$ exists if $x_j$ is the closest neighbour of $x_i$ where ties are broken arbitrarily. Thus every node $i$ has out-degree of at-most $1$. However, note that the in-degree of any node can be $>1$. %
Note that, $G$ can be obtained by replacing oriented edges in $G_d$ by undirected edges. Further, if $(i \rightarrow j),(j \rightarrow i)$ both exists, we replace it by one undirected edges $(i,j) \in G$. 

There are $3$ types of cycles in $G_d$:
\begin{enumerate}
\item Directed cycle with at least $3$ distinct elements (except end point which is repeated). An example of this kind (of length $3$) is illustrated in Figure \ref{fig:cycle-1nn}.
\item Cycle with a collider whose undirected skeleton forms a cycle in $G$ of length at least $3$ with distinct elements. This is illustrated in Fig. \ref{fig:cycle-3-1nn}.
\item  Directed cycle $i \rightarrow j \rightarrow i$. This is illustrated in Figure \ref{fig:cycle-2-1nn}.
\end{enumerate}
Now, we proceed to show that the first two cycles are not possible. Since a bi-directed edge in $G_d$ (Figure 
\ref{fig:cycle-2-1nn}) will get replaced by a single un-directed edge in $G$, this proves the Lemma.
    \begin{figure}[h!]

     \centering
    \includegraphics[width=0.25\textwidth]{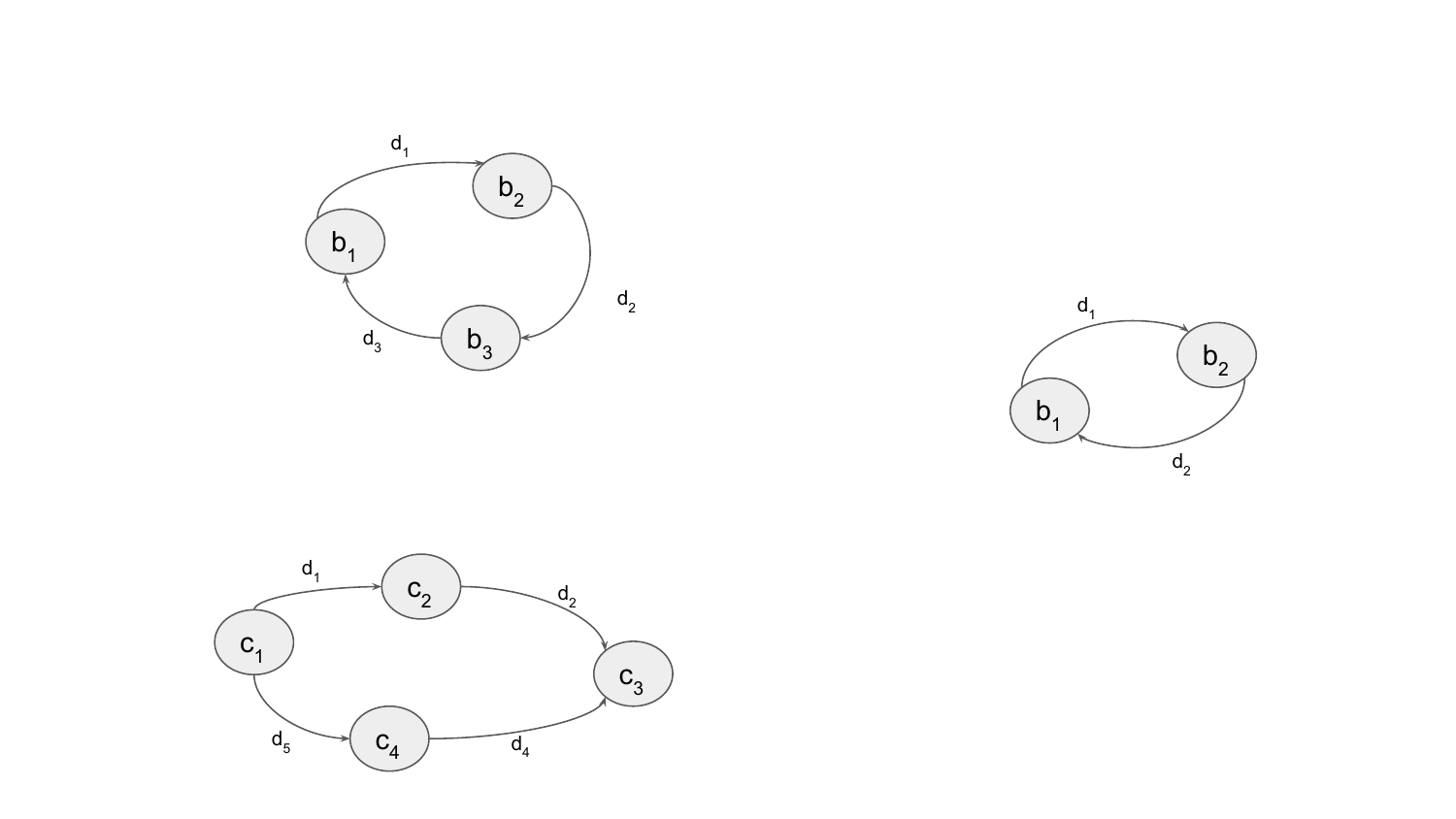}
    \vskip -\baselineskip

    \caption{Cycle of Type 3}
\vspace{-1mm}
    \label{fig:cycle-2-1nn}
\end{figure}

\textbf{Case 1:} 
We deal with the directed cycle by first proving a claim about a directed path in $G_d$.
\begin{claim} \label{claim:1}
Consider a directed path in \textbf{$G_{d}$} of the form $a_1 \xrightarrow[]{d_1} a_2 \xrightarrow[]{d_2} a_3 \xrightarrow[]{d_3} a_4 \ldots a_n$ with distinct elements. Here, $d_i$ notes the distance $d(x_{a_i},x_{a_{i+1}})$. 
Then, $d_1 \geq  d_2 \geq d_3 \ldots d_{n-1}$. 
\end{claim}
\begin{proof}
Let us prove this by contradiction. Say this was not true, then without loss of generality suppose $d_i < d_{i+1}$. We know that the outgoing edge represents the nearest neighbour of a node. If $d_i < d_{i+1}$ was indeed true, then the nearest neighbour of $a_{i+1}$ would have been $a_i$ and not $a_{i+2}$ which is clearly not the case since the edge $a_{i+1} \xrightarrow[]{d_{i+1}} a_{i+2}$ exists in $G_d$. We get a contradiction and therefore the claim is proven.
\end{proof}
    
Now, we consider the directed cycle $a_1 \xrightarrow[]{d_1} a_2 \xrightarrow[]{d_1}a_3 \ldots a_n \xrightarrow[]{d_{n}} a_1 $ where all elements $a_i$ are distinct.

    \begin{figure}[h!]

     \centering
    \includegraphics[width=0.4\textwidth]{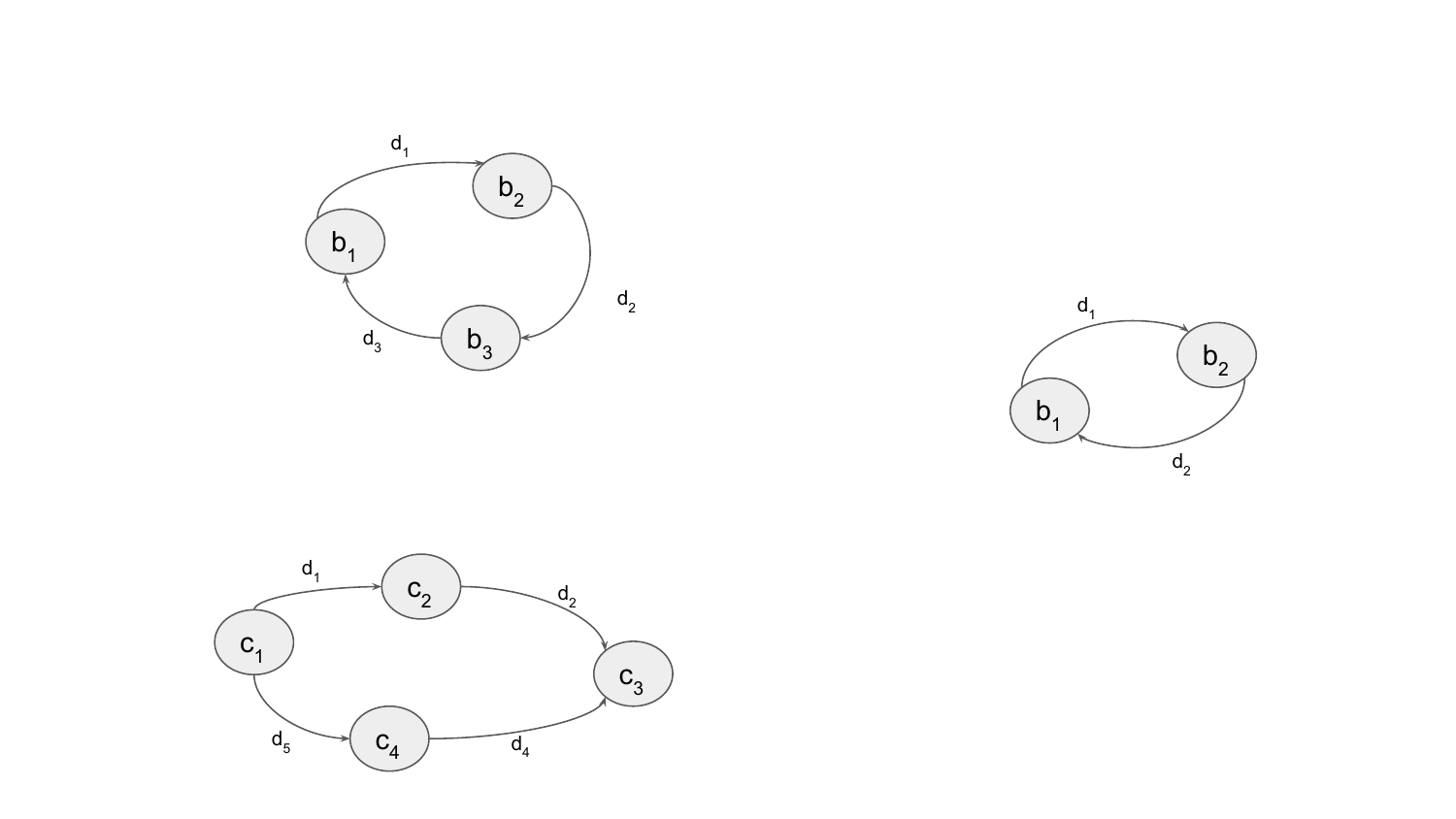}

    \caption{Cycle of Type 1}
\vspace{-1mm}
    \label{fig:cycle-1nn}
\end{figure}

From, Claim \ref{claim:1} we have $d_1 \geq  d_2 \geq  d_3 \geq d_n \geq d_1$ applying it on two directed paths $a_n,a_1,a_2$ and $a_1,a_2 \ldots a_n$. This is only possible if $d_1 = d_2 \ldots = d_n$. In this case, one can form an alternate $G_d$ by breaking the cycle where one can have $a_2 \rightarrow a_1$ instead of $a_2 \rightarrow a_3$. 

Therefore, this type of a cycle cannot exist. If it exists, it can be broken by re-assigning an equivalent nearest neighbor.

\textbf{Case 2:} Consider a configuration that is a undirected cycle in $G$ but not a directed cycle in $G_d$. Then, it is a cycle consists of paths $a_1,a_2 \ldots \rightarrow a_n$, $a_1,b_2, b_{n-1} \rightarrow a_n$ where they \textit{collide} at $a_n$. Here all nodes $a_i,b_i$ are distinct. An Example is the collider $c_3$ in Fig. \ref{fig:cycle-3-1nn}. Other orientations are left unspecified in this cycle. For this to there must exist $a_i \neq a_n$ or $b_i$ that has $2$ outward edges in this cycle. However, $\mathrm{out-degree} \geq 2$ is clearly cannot be possible as we are only dealing with 1-NNs and each node can have at-most one outward edge. Therefore such a cycle is not possible. In the Figure \ref{fig:cycle-3-1nn}, the edge $c_1 \rightarrow c_4$ or $c_1 \rightarrow c_2$ cannot exist as $G_d$ is a directed $1$-NN graph and thus such a cycle cannot exist.
    \begin{figure}[h!]

     \centering
    \includegraphics[width=0.5\textwidth]{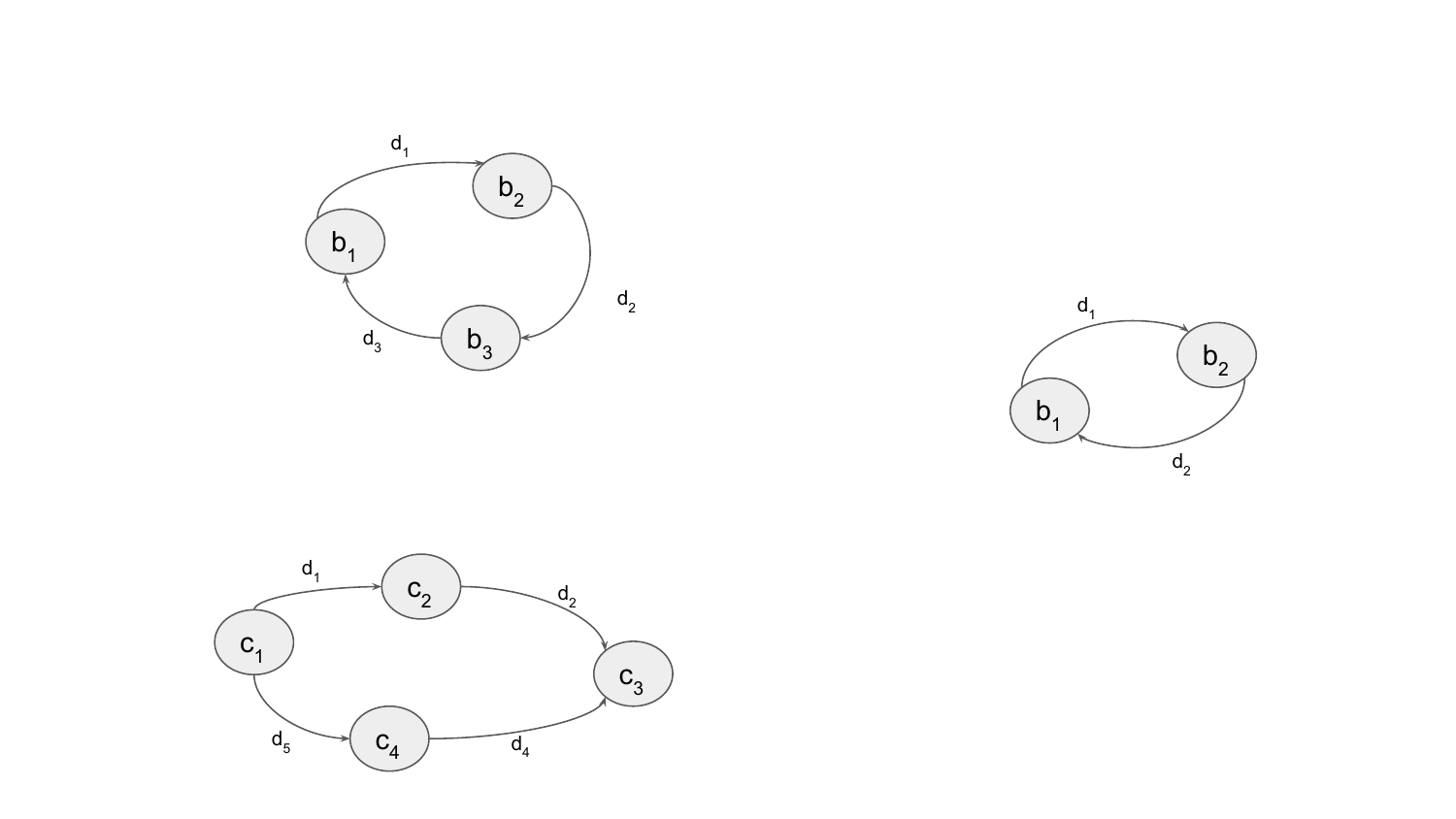}

    \caption{Cycle of Type 2}
\vspace{-1mm}
    \label{fig:cycle-3-1nn}
\end{figure}

Thus, we have shown that no cycle can exist in the neighbour graph $G$.
\end{proof}

\subsection{Baselines} \label{Baselines}
We compare our methods with the following baselines. 

\noindent
    \textbf{1. DLLP}: We use the DLLP method from \citet{ardehaly2017co} as a baseline for both Tabular and Image Datasets. This method fits the prediction score averaged over a bag of a deep classifer to bag level proportions.
    
    \noindent
    \textbf{2. EasyLLP}: This was proposed in \citet{busa2023easy} and we use this as a competitive baseline on all our datasets. They define a surrogate loss function based on the global label proportion.
    
    \noindent
    \textbf{3. GenBags}: Introduced in \citet{saket2022combining} is another popular algorithm for tabular datasets. The algorithm combines bag distributions, if possible, into good generalized bag distributions, which are then trained on by using standard proportion loss.
    
    \noindent
    \textbf{4. LLP-FC}: This methods was introduced in \citet{zhang2022learning} where LLP problem was reduced to learning from label noise problem. We use this for image datasets on which it was applied in \citet{zhang2022learning}.
    
    \noindent
  \textbf{5. LLP-VAT}:  Method from \citet{tsai2020learning} that we use on Image Datasets. This method is inspired by consistency regularization to produce a decision and approach LLP from a semi-supervised angle.
boundary that better describes the data manifold

\subsection{Implementation Details}
\label{subsec:implementation}
In most of our experiments, we fix $d(x,x')$ to be Cosine Distance: $1 - \frac{x \cdot x'}{\lVert x \rVert_2 \lVert x' \rVert_2}$ or Euclidean Distance: (Minkowski Distance with p=2) $\left(\sum_{i=1}^n |x_i-x'_i|^p\right)^{\frac{1}{p}}$ and we choose $k(x, x')$ to be one of RBF Kernel: $ \exp(-\gamma\cdot d(x, x')^2) $ or Matern Kernel: (a generalization of the RBF kernel) $ \frac{1}{\Gamma(\nu)2^{\nu-1}}\Bigg(
\frac{\sqrt{2\nu}}{l} d(x , x' )
\Bigg)^\nu K_\nu\Bigg(
\frac{\sqrt{2\nu}}{l} d(x , x' )\Bigg) $ where $d(\cdot, \cdot)$
is the Euclidean distance,  $K_v(\cdot)$
is a modified Bessel function and $\Gamma(\cdot)$
is the gamma function. We justify our choice via the slightly superior performance observed on using Cosine Distance and Matern Kernel as discussed in ablations in \ref{subsubsec:dist-metric} and \ref{sec:sim-metric} respectively.

The bags are batched into batches of size $max(BatchSize_{train}$, $TotalBags$). We always run the second step, the MLP training for max 100 epochs, using Adam Optimizer with $MLP_{LR}, MLP_{WD}$ learning rate and weight decay respectively. We use the Early Stopping criterion to decide when to stop training. According to this, we stop training if the validation AUC does not increase for 20 consecutive epochs, and then restore the model with the best validation AUC. Such a validation-based early stopping technique is quite popular in literature and well described in \citet{prechelt2002early}. Attached is the tensorflow callback code that we implement for the same, based on documentation provided in \citet{kerasearlystop}:

\lstset{language=Python}
\lstset{frame=lines}
\lstset{basicstyle=\ttfamily\footnotesize}
\lstset{escapeinside={<@}{@>}}
\begin{lstlisting}
tf.keras.callbacks.EarlyStopping(
          monitor='val_auc',
          patience=20,
          restore_best_weights=True,
          min_delta=0,
          verbose=1,
          mode='max',
      )
\end{lstlisting}
We use the official GitHub Implementations of \citet{saket2022combining}, \citet{zhang2022learning} and \citet{tsai2020learning} and perform a grid search over relevant mentioned parameters in their readme. We use WideResNet-16-4~\citep{zagoruyko2016wide} as the backbone for LLP-VAT and LLP-FC methods as it provides the best performance. For methods described in \citet{busa2023easy}, \citet{ardehaly2017co} we implement the algorithm described in the paper with the same MLP as described in \ref{subsec:exp_set} and sweep the appropriate hyperparams as described in the respective papers.

For Pooling with MultiHeadAttention we use the standard MultiHeadAttention framework from Set Transformers \citep{lee2019set} using $d=128$ dimensional embeddings as input (the 2nd last hidden layer of our MLP), $2$ heads, $1$ seed vector, and 2 row-wise feedforward layers each of size $d$.

\subsection{Illustrative Best Hyper-parameter values}\label{sec:hyperparam}

Here we provide the set of hyperparameters to reproduce the numbers obtained for the first iteration of our algorithm across all datasets and bag sizes in Table \ref{tab:hyper-adult}, Table \ref{tab:hyper-marketing}, Table \ref{tab:hyper-criteo}, Table \ref{tab:hyper-cifar_s} and Table \ref{tab:hyper-cifar_b}.

\begin{table}[h!]
\centering
\caption{The set of hyperparameters for various bag sizes for Adult Dataset for the first iteration.}
\label{tab:hyper-adult}
\begin{tabular}{l|ccccccccc}
Bag Size & $\lambda_s$ & $\lambda_b$ & $\lambda_a$ & $MLP_{LR}$ & $MLP_{WD}$ & $k$ & $\tau$  & $\delta_d$ & T   \\ \hline
2048     & 0.0001      & 0.0184      & 0.0001      & 0.0007   & 1.00E-12 & 1   & 0.03558 & 1          & 100 \\
1028     & 0.0001      & 0.0576      & 0.0001      & 0.0012   & 1.00E-12 & 1   & 0.01    & 1          & 100 \\
512      & 0.003       & 0.0796      & 0.0001      & 0.00033  & 0.00025  & 1   & 0.03    & 1          & 100 \\
125      & 0.0001      & 0.3422      & 10          & 0.00028  & 0.1      & 1   & 0.0216  & 0.01       & 100 \\
32       & 186         & 0.1659      & 7.5         & 0.00014  & 2.12E-11 & 17  & 0.3327  & 0.6        & 100 \\
8        & 0.0001      & 0.4427      & 10          & 0.001    & 1.00E-12 & 1   & 0.3515  & 1          & 100
\end{tabular}%
\end{table}
\begin{table}[h!]
\centering
\caption{The set of hyperparameters for various bag sizes for Marketing Dataset for the first iteration.}
\label{tab:hyper-marketing}
\resizebox{\textwidth}{!}{%
\begin{tabular}{l|ccccccccc}
Bag Size & $\lambda_s$ & $\lambda_b$ & $\lambda_a$ & $MLP_{LR}$ & $MLP_{WD}$ & $k$ & $\tau$  & $\delta_d$ & T   \\ \hline
2048     & 1.928       & 7.6986      & 0.0136      & 0.0002     & 3.70E-07   & 15  & 0.5311  & 0.1489     & 100 \\
1028     & 0.0001      & 0.02227     & 0.000167    & 0.00005    & 0.06129    & 28  & 0.03077 & 0.4963     & 100 \\
512      & 0.00287     & 0.2676      & 0           & 0.00053    & 0.0825     & 7   & 0.0815  & 1          & 100 \\
125      & 200         & 26.058      & 10          & 0.0027     & 0.0007735  & 28  & 0.03357 & 0.01       & 100 \\
32       & 200         & 200         & 9.661       & 0.00058    & 4.54E-12   & 29  & 0.0111  & 1          & 100 \\
8        & 4.146       & 2           & 0.0001      & 0.00085    & 9.20E-11   & 2   & 0.2023  & 1          & 100
\end{tabular}%
}
\end{table}
\begin{table}[]
\centering
\caption{The set of hyperparameters for various bag sizes for Criteo Dataset for the first iteration.}
\label{tab:hyper-criteo}
\resizebox{\textwidth}{!}{%
\begin{tabular}{l|ccccccccc}
Bag Size & $\lambda_s$ & $\lambda_b$ & $\lambda_a$ & $MLP_{LR}$ & $MLP_{WD}$ & $k$ & $\tau$  & $\delta_d$ & T   \\ \hline
128      & 0.4359      & 0.2265      & 0           & 0.000001   & 0.0000078  & 11  & 0.14294 & 0.00288    & 200 \\
32       & 0.0003      & 0.2502      & 9.9885      & 0.00002368 & 0.000547   & 23  & 0.5335  & 0.1326     & 200 \\
8        & 0.00015     & 0.1884      & 9.716       & 0.00006    & 0.0009567  & 29  & 0.4104  & 0.9996     & 200
\end{tabular}%
}
\end{table}
\begin{table}[h!]
\centering
\caption{The set of hyperparameters for various bag sizes for CIFAR-S Dataset for the first iteration.}
\label{tab:hyper-cifar_s}
\resizebox{\textwidth}{!}{%
\begin{tabular}{l|ccccccccc}
Bag Size & $\lambda_s$ & $\lambda_b$ & $\lambda_a$ & $MLP_{LR}$ & $MLP_{WD}$ & $k$ & $\tau$  & $\delta_d$ & T   \\ \hline
2048     & 0.057       & 0.02646     & 0.000156    & 0.000589   & 0.0000018  & 3   & 0.02687 & 0.6339     & 200 \\
1024     & 0.00015     & 0.0175      & 0.00169     & 0.0006     & 0.000025   & 15  & 0.1214  & 0.9697     & 200 \\
512      & 34.32       & 0.0338      & 0           & 0.00038    & 0.000014   & 1   & 0.267   & 0.2614     & 200 \\
128      & 0.3674      & 0.105       & 0           & 0.0016     & 0.00476    & 9   & 0.31    & 0.01823    & 200 \\
32       & 12.43       & 1.2541      & 10          & 0.000068   & 0.000336   & 4   & 0.4934  & 0.3727     & 200 \\
8        & 0.0001      & 0.8555      & 10          & 0.0002     & 0.00001    & 28  & 0.1961  & 0.4421     & 200
\end{tabular}%
}
\end{table}
\begin{table}[h!]
\centering
\caption{The set of hyperparameters for various bag sizes for CIFAR-B Dataset for the first iteration.}
\label{tab:hyper-cifar_b}
\resizebox{\textwidth}{!}{%
\begin{tabular}{l|ccccccccc}
Bag Size & $\lambda_s$ & $\lambda_b$ & $\lambda_a$ & $MLP_{LR}$ & $MLP_{WD}$ & $k$ & $\tau$ & $\delta_d$ & T   \\ \hline
2048     & 0.00043     & 0.8279      & 0.0002      & 0.000001   & 0.000001   & 2   & 0.5936 & 0.2771     & 200 \\
1024     & 0.0001      & 0.003556    & 0.00695     & 0.00032    & 0.000001   & 4   & 0.435  & 0.6544     & 200 \\
512      & 0.1289      & 0.00754     & 0           & 0.0116     & 0.0011     & 14  & 0.4337 & 0.407      & 200 \\
128      & 0.0001      & 0.016       & 0           & 0.00214    & 0.00002    & 25  & 0.4508 & 0.0001          & 200 \\
32       & 0.000192    & 0.0968      & 8.8918      & 0.000096   & 0.000723   & 1   & 0.4294 & 0.00385    & 200 \\
8        & 0.0008      & 0.099       & 10          & 0.0000013  & 0.000009   & 1   & 0.4856 & 0.2427     & 200
\end{tabular}%
}
\end{table}

\section{Extended Related Work}

\textbf{Belief Propagation}: Belief Propagation (BP)  has been used to compute marginals and find MAP estimates in standard sparse graphical models, like Bayesian networks and Markov random fields \citep{pearl2022reverend} by message passing across edges on an appropriate graph. Sum-product BP algorithm is used for computing marginals and it is known to converge on trees. It was also extended to polytrees \citep{kim1983computational}. It is also an effective approximate algorithm on general graphical models \citep{pearl1988probabilistic}. More relevant to our work is the fact that sum product Belief Propagation has found widespread in communication system, where it is used to soft-decode a binary string message from their parity checks as in LDPC (low-density parity-check) codes \citep{richardson2001capacity,gallager1962low}, iterative decoding of turbo codes \citep{mackay2003information,kschischang2001factor}. In communication codes, parity checks are designed so as to have nice properties on the graphical models they induce. In our problem, the bag levels constraints can be thought of as parity checks but are given and we add additional constraints from covariate information that is also given. We use a public scalabale and efficient implementation \texttt{PGMax} \citep{zhou2022pgmax} of the sum-product message passing algorithm.

\textbf{Decoding from Pooled Data:} Another very relevant area of work learning from pooled data paradigm \citet{scarlett2017phase,el2018decoding} where the aim to identify the categorical labels of a large collection of items from histogram information at the bag level. \citet{el2018decoding} present an approximate message passing algorithm for decoding a discrete signal of categorical variables from several histograms of pooled subsets of data. This line of work is also largely aimed at the regime of very large bags \citep{scarlett2017phase} ($\sim O(\frac{n}{\log n})$) and there is no covariate information available. In our problem, bags are constant in size and they are disjoint.

{\textbf{Weak Supervision:} There has been recent interest in exploring training of models under weak supervision, where complete label information is not available. Though these methods do not exactly map to the learning from label proportions setup, the idea of using pseudo labels to train a model with pre-trained representations has been explored before in \citet{chen2022shoring}, \citet{pukdee2022label}, \citet{ratner2017snorkel}, \citet{karamanolakis2021self}, which creatively bypass the lack of labels, by either using covariate information, label propagation based on the few available labels, heuristically creating new labels or even using student-teacher models. While such approaches are designed to work in the case of lack of all instance labels, none of them deal with the specific case of weak supervision we concern ourselves with, namely learning only from the aggregate bag labels and no instance labels whatsoever. This is what makes the LLP setup even harder and sparser in terms of information available for the learner.}

\section{Approximate Convergence Analysis} \label{sec:ACA}
\textbf{Loopy Belief Propagation in the literature:} We would like to point out that even classical literature from the past \citet{frey1997revolution} has pointed out that while loopy belief propagation in graphs with cycles don't have known convergence guarantees, in many applications like error correction for communicating over noisy channels, belief propagation based decoding perform extremely well (abstract of \citet{frey1997revolution} makes \textit{exactly} this case). Even considering very recent work on message passing on complex networks \cite{newman2023message}, the conclusion remains that loopy belief propagation is effective in practice but not very easily amenable to analysis. Although we showed the edges due to the similarity constraints form a cycle free graph for $1$-NN based Gibbs distribution, bag constraints introduce cycles.   However, \cite{newman2023message} offers an approximate analysis based on \textit{linearized version} of sum-product-BP that we adopt and we show that the inverse temperature parameters chosen for Adult Dataset (Table \ref{tab:hyper-adult}) are roughly orderwise within the stability region for an approximate linearized version of BP with the same graph structure as in the Gibbs distribution we define.

\textbf{Sufficient conditions for convergence of the Sum-Product Algorithm} \citep{mooij2007sufficient}:

We refer to Corollary 1 from \citep{mooij2007sufficient} for a sufficient condition such that message updates from Loopy BP is a contraction. We substitute the values of $J_{ij}$, from our Gibbs Distribution to evaluate this condition which is: $\max \limits_{i} [ (|N(i)|-1) \max \limits_{j \in N(i)} \tanh(|J_{ij}|) ] <1  $.
For the case of using $1$-NN constraints, $|N(i)|-1 = B-1$ ($B$ is the bag size). Now note that $ \lvert J_{ij} \rvert \leq 2 \lambda_b + 4 \lambda_s $ as discussed above and as observed from Table 12.

And we observe that for entries in Table 12 for example for bag size 8, for Corollary 1, the condition (Eq. 15 from the paper) is satisfied and thus our LBP is a $l_{\infty}$-contraction and converges to a unique fixed point, irrespective of the initial messages.

For larger bags, we offer an approximate linearized BP analysis below.

\textbf{Stability analysis of the approximate linearized BP:} Now, we offer some approximate analysis on convergence of the Belief Propagation step for some simpler cases (like the Adult Dataset). In what follows, we follow the recipe given in \cite{newman2023message} to linearize the BP. We have a Gibbs distribution over $n$ binary variables $\mathbf{y} \in \{-1,+1\}^n$ given by:

$\mathbb{P}(\mathbf{y}) \propto \exp \left( \sum \limits_{i} h_i y_i + \sum \limits_{i \neq j} J_{ij} y_i y_j \right)$

$h_i$ forms the external field that biases the variables away from $1/2$. However, we will analyze the Gibbs distribution without $h_i$ as this external field can be taken to be the prior bias for each of the variables.

$\{J_{ij}\}$ is very sparse and is non zero only if $i,j$ are in each others 1-NN neighborhood or $i,j$ belong to the same bag. $J_{ij}$ in our formulation could be negative but we actually, analyze the ferromagnetic version with only the correct adjacency matrix, i.e.

$\mathbb{P}(\mathbf{\tilde{y}}) \propto \exp(  -  \beta \sum \limits_{i \neq j} \mathbf{1}_{J_{ij} \neq 0} \tilde{y}_i \tilde{y}_j )$

For the above Ising model over $\mathbf{\tilde{y}}$, we have the following normalized update rule (this is from \citet{newman2023message}):

\begin{align}
    \frac{1}{2} (1+\epsilon_{i \leftarrow j})  &= \frac{1}{Z_{i \leftarrow j}} \prod \limits_{k \in N(j)-i} \frac{1}{2} \left[\exp(\beta) (1+\epsilon_{j \leftarrow k}) + \exp(-\beta) (1-\epsilon_{j \leftarrow k})\right] \nonumber \\  
    Z_{i \leftarrow j} & = \sum \limits_{r = \{+1,-1\}} \prod \limits_{k \in N(j)-i} \frac{1}{2} \left[\exp(\beta r) (1+\epsilon_{j \leftarrow k}) + \exp(-\beta r) (1-\epsilon_{j \leftarrow k})\right]
\end{align}

Here, $N(j)$ is the neighborhood of $j$ according to the graph obtained from the $1$-$0$ adjacency matrix $1_{J_{i,j} \neq 0}$.
This is equivalent to the BP iterations quoted in Section \ref{subsec:step-1} (except for the normalization $Z_{i \leftarrow j}$ and setting $m_{i \leftarrow j} (+1) = 1+\epsilon_{i \leftarrow j},~ m_{i \leftarrow j} (-1)  = 1 -\epsilon_{i \leftarrow j} $ due to the normalization.

Again following \citet{newman2023message}, and ignoring the second order terms in $O(\epsilon_{\{\cdot\}}^2)$, we have the following linearized Belief Propagation:
\begin{align}
    \epsilon_{i \leftarrow j} = \tanh{\beta} \sum \limits_{k \in N(j)-i} \epsilon_{j \leftarrow k}
 \end{align}

We have an edge - incidence matrix $H$ of size $2|E| \times 2|E|$ where $|E|$ is the set of edges in the graph indexed by valid oriented edges $i \leftarrow j$. For the row $i \leftarrow j$, all columns corresponding to $j \leftarrow k: k \neq i, ~k \in N(j)$  have $1$ and all other entries are $0$.

Essentially, the messages if the linearized BP converge, then it is a fixed point of the equation:
$ X= \tanh(\beta) Hx$. Therefore, convergence is exponentially fast if $\tanh(\beta) \lVert H \rVert_2 < 1$ as the linear operator becomes a contraction. Here, $\lVert H \rVert_2$ is the spectral norm of $H$.  

\textbf{Computation of inverse temperature threshold for Adult Dataset:} For the full Adult dataset, the edge incidence matrix even for $1$-NN graph is of the order of $100$k. Hence, computing the largest singular value of this is a time consuming operation. However, we randomly subsampled $5k$ points from the distribution and we analyze the spectral norm of the edge incidence matrix obtained from this. 

We found that $\lVert H \rVert_2 \approx 6.344$. This means that inverse temperature is at most $\tanh^{-1}(1/6.344) \approx 0.158$ for the linearized BP to converge. We observe from Table \ref{tab:hyper-adult} that $\lvert J_{ij} \rvert \leq 2 \lambda_b + 4 \lambda_s$ is much smaller than this threshold for larger bags.

\textbf{Key Takeaway:} Exact loopy BP updates are shown to be a contraction based on sufficient conditions in \cite{mooij2007sufficient} for smaller bag sizes for Adult Dataset. With an approximate linearized BP analysis of the ferromagnetic model with the same graph structure, for a subsampled set of data points i.i.d from the original Adult Dataset without replacement, we show that for large bags the chosen hyperparameters for inverse temperatures are well within the convergence threshold.

\section{Limitations and Future Work}

There are several unexplored interesting directions that we wish to pick up as future work. Notably, one of the primary ones is to explore alternate energy potentials for the Gibbs distribution other than quadratic terms we use now. It might also be of independent interest to further investigate why such a simple proposition like BP works on such a scale efficiently converging to marginals proving highly useful in supervised learning even with $1$-NN based covariate information. A complete theoretical understanding behind the success of BP for the target task would be an interesting direction building on the theoretical pointers in the supplement.

\newpage

\end{document}